\newtheorem{thm}{Theorem}[section]
\newtheorem*{thm*}{Theorem}
\newtheorem{prop}[thm]{Proposition}
\newtheorem{lem}[thm]{Lemma}
\theoremstyle{definition}
\newtheorem{defn}[thm]{Definition}
\theoremstyle{remark}
\newtheorem{rem}[thm]{Remark}
\let\stopcode\relax
\long\def\startcode#1\stopcode{\unskip
\begingroup
\advance\leftskip by \parindent\parindent=-\parindent\advance\parskip by1ex
\par #1\endgroup}
\newcommand{\card}{{\texttt{\itshape\#}}}
\newcommand{\abs}[1]{\left\lvert #1 \right\rvert}
\newcommand{\norm}[1]{\left\lVert #1 \right\rVert}
\DeclareMathOperator{\circmean}{CircMean}
\newcommand{\ambdim}{N}
\newcommand{\bR}{\mathbb{R}}
\newcommand{\bZ}{\mathbb{Z}}
\let\c@equation\c@thm
\numberwithin{equation}{section}
\title[Circular coordinates in recurrent time series]{Subsampling, aligning, and averaging to find circular coordinates in recurrent time series}
\author[A. J. Blumberg]{Andrew J. Blumberg}
\address{Irving Institute for Cancer Dynamics \\ Departments of Mathematics and Computer Science \\ Columbia University, NY}
\email{andrew.blumberg@columbia.edu}
\thanks{The first author was partially supported by the NSF grant
  DMS-1912194 and by ONR grant N00014-22-1-2679.}
\author{Mathieu Carri\`ere}
\address{DataShape, Centre Inria d'Universit\'e d'Azur \\ Biot, France}
\email{mathieu.carriere@inria.fr}
\author{Jun Hou Fung}
\address{Department of Systems Biology \\ Columbia University, NY}
\email{jf3380@cumc.columbia.edu}
\thanks{The third author was supported by the NSF grant DMS-1912194.}
\author[M. A. Mandell]{Michael A. Mandell}
\address{Department of Mathematics \\ Indiana University, IN}
\email{mmandell@iu.edu}
\thanks{The fourth author was supported by the ONR grant N00014-22-1-2675}
\begin{document}

\begin{abstract}
We introduce a new algorithm for finding
robust circular coordinates on data that is expected to exhibit
recurrence, such as that which appears in neuronal recordings of
\textit{Caenorhabditis elegans}.  Techniques exist to create circular coordinates on a simplicial complex from
a dimension 1 cohomology class, and these can be applied to the Rips
complex of a dataset when it has a prominent class in its dimension 1 cohomology.  However, it is known this approach is
extremely sensitive to uneven sampling density.

Our algorithm comes with a new method to correct for uneven
sampling density, adapting our prior
work on averaging coordinates in manifold learning.  We use rejection 
sampling to correct for inhomogeneous sampling and then apply 
Procrustes matching to align and average the
subsamples. In addition to providing a more robust coordinate than
other approaches, this subsampling and averaging approach has better efficiency.

We validate our technique on both synthetic data sets and
neuronal activity recordings.  Our results reveal a topological model of neuronal
trajectories for \textit{C. elegans} that is
constructed from loops in which different regions of the brain state
space can be mapped to specific and interpretable macroscopic
behaviors in the worm.
\end{abstract}

\maketitle

\tableofcontents

\section{Introduction}

In this paper, we consider coordinates on recurrent time series data.
Good coordinates should reflect this recurrence and so should take
values on a circle.  There has been a
substantial literature on producing coordinates of this type (e.g.,
\cite{Linderman2019, Brennan2023, ZYL24} in the context of neuronal activity).
The work of de Silva, Morozov, and
Vejdemo-Johannson~\cite{deSilva2011} introduced the idea of using
techniques from {\em topological data analysis} (TDA) to find circular
coordinates, specifically {\em persistent cohomology}.  Their basic approach
is to infer a cohomology class and choose a specific cocycle
representative that is the smoothest, which amounts to solving a
least-squares optimization problem.  
In~\cite{VejdemoJohansson2015}, these techniques were applied to
detect and quantify recurrent patterns in human motions.

TDA methods have qualitative differences from competing approaches
that make them particularly attractive for exploratory data analysis.
For example, when the feature scale is unknown, persistent cohomology 
can be used to find a range of scales where the coordinate appears the 
most robust.  However, as a consequence of the optimization scheme used to produce a
cohomology class representative, regions that are more densely sampled
tend to be represented by smaller changes in the coordinate.  
This issue was identified in~\cite{Rybakken2019} where a
renormalization technique was introduced to try to correct for uneven
sampling.  Similarly, \cite{Paik2023} proposed using weighted Laplacians or $L^p$-optimization to infer robust circular coordinates.  Another issue that arises is that the computational cost of
computing the persistent cocycle is prohibitive even for moderately
large data sets.  This was addressed in~\cite{Perea2020} by
aggressively choosing landmarks and using those to compute
coordinates. There, an adapted version of the weighting scheme
of~\cite{Rybakken2019} was used to handle uneven density and possible
sensitivity to the specific choice of landmarks.  On the other hand, the
use of landmarks is particularly sensitive to errors introduced by
noise and outliers.

We introduce a new algorithm for finding circular coordinates that does not require
an ad hoc weighting scheme, is insensitive to uneven sampling density, and is robust to noise and outliers, while retaining efficiency.  At a high level, our algorithm
is comprised of the following steps: 

\begin{enumerate}
\item {\em Rejection sampling.} We apply density-based rejection
sampling to the data to produce a number of subsample data sets that with
high probability correct for uneven sampling of the original data.
\item {\em Persistent cohomology.} We identify a long bar in the
first persistent cohomology group $PH^{1}$ on each subsample, which we
use to obtain a circular coordinate adapted to that subsample.
\item {\em Alignment and averaging.} We align and average the
resulting coordinates to obtain a final coordinate on the entire data set.
\end{enumerate}

We use standard density estimation techniques to carry out the
rejection sampling; these can be shown to be asymptotically correct
and work well in practice.  For the alignment and averaging, we build
on our prior work~\cite{BCFM} that uses the solution to the generalized
Procrustes problem to align and average different Euclidean
coordinates.  We use $O(2)$ Procrustes alignment on $\bR^{2}$ to
produce an initial seed for a hill climbing algorithm to search for
the best alignment of circular coordinates for averaging.   

We applied our rejection sampling, alignment, and averaging
algorithm to extract circular coordinates from whole-brain neuronal 
recordings from the nematode \textit{C.\ elegans}.  There is a 
recurrent locomotory cycle reflected in the worm's neuronal activity, which explains why circular coordinates are sensible.  We found coordinates
for a variety of worm neuronal trajectories, providing a basis for further scientific
analysis of stimulus response.  Furthermore, we validated our approach using an
information-theoretic criterion, showing that the resulting
coordinates are more 
informative than the uncorrected coordinates.  Finally, we observed
significant time savings from computing and aggregating persistent cocycle
representatives on subsamples.

\subsection*{Outline}
Section~\ref{sec:circ-coords-review} reviews the use of persistent
cohomology to find circular coordinates.  Section~\ref{sec:density}
describes the method for performing rejection sampling to construct
the subsamples for circular coordinates.  Section~\ref{sec:procrustes}
describes the algorithm to take the coordinates on the subsamples and
build a coordinate on the whole data set by alignment and averaging.
Section~\ref{sec:experimental} presents experimental results of our
algorithm applied to synthetic and real world data.

\subsection*{Acknowledgments}

We thank Ra\'{u}l Rabad\'{a}n and Eviatar Yemini for many
helpful conversations related to the subject of this paper.
%We also thank the worms for all their help.

\section{Finding circular coordinates using persistent cohomology}\label{sec:circ-coords-review}

In this section, we review techniques from \emph{topological data
analysis} (TDA) for extracting circular coordinates from
high-dimensional data, following work of~\cite{deSilva2011},
\cite{Rybakken2019}, and~\cite{Perea2020}.  

The basic overall process of generating \emph{circular coordinates} $X \to S^1$ from a data set $X$ is as follows:

\begin{enumerate}

\item We construct a filtered simplicial complex $K_\bullet$, the
  Vietoris-Rips complex, on the data set $X$.  The filtration encodes
  information about the shape of the data as a scale parameter $\epsilon$
  varies.

\item We compute the first persistent cohomology $PH^1$ of this
  filtered complex, and find a cohomology class which is present
  across a wide range of feature scales (the ``longest bar'').

\item We identify the value of the scale parameter $\epsilon$ within the
  longest bar, following \cite{Rybakken2019}.  

\item We perform an optimization process to extract a
  coordinate map $\bar f$ to $S^1$ from this persistent cohomology class.

\end{enumerate}

We assume the reader is familiar enough with TDA that the first two
steps do not require additional exposition.  (See for
example~\cite{Carlsson-Intro, Ghrist-Intro, Lesnick-Intro} for a basic
overview or~\cite{Edelsbrunner2010,  Oudot2015, Chazal2016,
Rabadan2019} for more detailed treatments.) We also point out that the theoretical aspects of ensemble approaches to persistent homology, similar to what we shall describe later, have been recently explored \cite{Solomon2022}.

In step 3, it may be the case that the scale parameter is not very
large.  There are many possible heuristics for what ``large'' means;
for example, comparison to ensure the scale parameter is substantially
larger than the median $k$th nearest neighbor length for $k=3$ is a common
criterion. In such a case, it is reasonable to infer that cohomology
classes in $PH^{1}$ are either too small to reflect actual geometry in
the data or the geometry underlying the point cloud cannot be
adequately described by a circular coordinate.  This ability to detect
cases when the assumed geometric model is misspecified is a strength
of the TDA approach.  We can optionally alert the user when the identified bar
is too small.  Moreover, if there are multiple large bars, we can notify the user that there may be multiple independent and
informative coordinates.

We now give a terse review of the procedure in step 4, based
on~\cite{deSilva2011}. 
A foundational result in algebraic topology is that for a simplicial
complex $K_\bullet$, elements of $H^{1}(K_\bullet;\bZ)$ are in
one-to-one correspondence with homotopy classes of maps from the
geometric realization $K$ to $S^{1}$.  There is a straightforward way
to go from an integral $1$-cocycle $\alpha$ to such a map: having
chosen a well-ordering of the vertices of $K_\bullet$, the data in
$\alpha$ (the cochain itself) is an integer-valued function on the
set of edges.  The map that sends all vertices of $K_\bullet$ to a
given basepoint of $S^{1}$ and sends each edge $e$ to $S^{1}$ by
wrapping around $\alpha(e)$ times (using the orientation from the ordering
of vertices) gives a map $K\to S^{1}$ in the corresponding homotopy
class.  In our setting, this construction is useless because when
$K_\bullet$ is the Vietoris-Rips complex of a data set, the resulting coordinate map $\bar f$ on the data
set is a constant map.

To address this issue, de Silva, Morozov, and Vejdemo-Johansson
\cite{deSilva2011} propose a method using real cocycles.  In the
following, $C^{n}(K_\bullet,A)$ denotes cochains with coefficients in $A$
(functions from $K_{n}$ to $A$), and $\delta$ represents the
differential in the cochain complex.

\begin{prop}[{\cite[Proposition 2]{deSilva2011}}]\label{prop:desilva}
Let $[\tilde{\alpha}] \in \operatorname{im}(H^1(K_\bullet; \bZ)
\to H^1(K_\bullet; \bR))$ and let $\tilde \alpha$ be a cocycle
representative of the form $\tilde{\alpha} = \alpha +
\delta f$ for some $\alpha \in C^1(K_\bullet; \bZ)$ and $f \in
C^0(K_\bullet; \bR)$.  Then there is a continuous function 
$K \to \bR/\bZ$ that sends each $x \in K_0$ to
$f(a) \pmod{\bZ}$ and each $e \in K_1$ to an interval of
(signed) length $\tilde{\alpha}(e)$.  
\end{prop}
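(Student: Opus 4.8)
The plan is to construct $\bar f$ one skeleton at a time, starting from the formula that the two required properties force on the $1$-skeleton and then extending over higher cells using the fact that $\bR$ is a contractible covering space of $S^1 = \bR/\bZ$.

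First I would fix the chosen well-ordering of the vertices of $K_\bullet$. For each edge $e = \{a,b\}$ with $a < b$, parametrize $e$ affinely by $[0,1]$ with $0 \leftrightarrow a$ and $1 \leftrightarrow b$, and set $\bar f(t) = f(a) + t\,\tilde\alpha(e) \pmod{\bZ}$. Because $\tilde\alpha = \alpha + \delta f$ with $\alpha \in C^1(K_\bullet;\bZ)$, we have $\tilde\alpha(e) = \alpha(e) + f(b) - f(a)$ and $\alpha(e) \in \bZ$, so the two endpoint values of this formula are $f(a)$ and $f(b)$ mod $\bZ$. These agree on shared vertices, so the per-edge formulas glue to a continuous map on the $1$-skeleton $K^{(1)}$ that sends each vertex $a$ to $f(a) \bmod \bZ$ and carries the edge $e$ onto the image of an interval of signed length $\tilde\alpha(e)$ starting at $f(a)$. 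Thus the two asserted properties already hold on $K^{(1)}$; what remains is a continuous extension to all of $K$.

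Next I would extend over the higher skeleta by the usual cell-by-cell argument. Given a map on $K^{(n-1)}$, extending over an $n$-cell $\sigma$ amounts to extending the restriction along $\partial\sigma \cong S^{n-1}$ to a map on $D^n$, which is possible exactly when that restriction is null-homotopic. For $n \ge 3$ this is automatic: $S^{n-1}$ is simply connected, so the map lifts through $\bR \to \bR/\bZ$, and since $\bR$ is convex the lift extends over $D^n$. For $n = 2$ and $\sigma = \{a,b,c\}$ with $a<b<c$, the restriction to $\partial\sigma$ is a loop whose total signed displacement equals $\tilde\alpha(\{a,b\}) + \tilde\alpha(\{b,c\}) - \tilde\alpha(\{a,c\}) = (\delta\tilde\alpha)(\sigma)$, which vanishes since $\tilde\alpha$ is a cocycle; hence the loop has winding number $0$, lifts to a closed path in $\bR$, and extends over $D^2$. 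Iterating over dimensions — with no need to revise the lower skeleta, since each extension problem depends only on the map already fixed on the boundary of the cell being filled — produces the desired continuous $\bar f : K \to \bR/\bZ$.

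The only place where the hypotheses do genuine work is the passage from the $1$-skeleton to the $2$-skeleton: the cocycle identity $\delta\tilde\alpha = 0$ is precisely what makes every triangle's boundary loop contractible. Everything in dimensions $\ge 3$ is formal from $\pi_{\ge 2}(S^1) = 0$, and the behavior on vertices and edges is dictated verbatim by the statement, so I expect the written proof to be short, with the only real care needed in matching the orientation conventions on edges and triangles so that the displacement around $\partial\sigma$ comes out equal to $(\delta\tilde\alpha)(\sigma)$.
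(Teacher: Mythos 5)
Your proof is correct and is essentially the standard argument (the one in the cited source \cite{deSilva2011}, which this paper quotes without reproving): define the map affinely on each edge so that the endpoint identity $\tilde\alpha(e)=\alpha(e)+f(b)-f(a)$ with $\alpha(e)\in\bZ$ makes the pieces glue, kill the obstruction on $2$-simplices via $\delta\tilde\alpha=0$, and extend over higher skeleta using $\pi_{n}(S^1)=0$ for $n\ge 2$. No gaps; the orientation bookkeeping you flag is exactly the only point requiring care.
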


In the case when $K_\bullet$ is the Vietoris-Rips complex of a data
set $X$ at a scale $\epsilon$, the set of vertices $K_{0}$ is $X$ and
we get coordinates $X\to \bR/\bZ$ given by the
function in the proposition.  Namely, $x\in X$ goes to the reduction
mod $\mathbb{Z}$ of $f(x)$.  To get circular coordinates $X\to
S^{1}$ with $S^{1}$ modeled as $\bR/2\pi\bZ$, we take $\bar f(x)=2\pi f(x)$.
For $S^{1}$, modeled as the unit circle in $\bR^{2}$, we take $\bar
f(x)=(\cos (2\pi f(x)),\sin(2\pi f(x)))$.

The procedure is then to start with an integral $1$-cocycle $\alpha$
and produce an appropriate real cocycle $\tilde{\alpha}$ that is
cohomologous to $\alpha$ in $C^1(K_\bullet; \bR)$.  The approach
of~\cite{deSilva2011} suggests using the ``harmonic''
representative, that is, the one that minimizes the energy functional
\[
E(\tilde \alpha)=\sum_{e\in K_{1}}\tilde\alpha(e)^{2}.
\]
Since we are solving the problem over the space $\tilde \alpha =\alpha
+\delta f$, if we rewrite in terms of $f$ we can implement this by solving
a least squares problem in the variables given by $K_{0}$.  This has a
unique solution provided every vertex lies on some edge. Otherwise,
isolated vertices give free parameters in the solution,
and this amounts to assigning them arbitrary circular coordinates.

% maybe add a definition here

\section{Density uniformization via rejection sampling}\label{sec:density}

The process described in the previous section is sensitive to the
density variation within the point cloud.  Non-uniform sampling causes distortion in
the circular coordinates, as we illustrate in
subsection~\ref{ss:density}. This section explains a general procedure to
fix this problem using rejection sampling.

The problem of non-uniform sampling distorting circular coordinates is
well-known and has been studied before. For example, Rybakken, Baas,
and Dunn~\cite{Rybakken2019}, Perea~\cite{Perea2020}, and Paik and Park~\cite{Paik2023} propose
additional steps to help ameliorate density imbalances when computing
circular coordinates.  These works suggest modifying the
optimization problem to find a ``weighted'' harmonic representative,
where the weights come from an altered metric on the underlying space
that accounts for density.  There are different possible
heuristics for choosing these weights.

Rather than attempting to infer weights based on density variations, we directly
equalize the density by rejection sampling and average the
resulting coordinates using Procrustes alignment of the coordinates
produced from subsamples.  This procedure has several notable advantages.
First, averaging produces more robust coordinates that are less
sensitive to outliers (e.g., see~\cite{BCFM} for a discussion of this in
the context of manifold learning).  Second, it is simple and easy
to generalize.  Third, we can use existing theoretical work on density
estimation to justify the subsampling procedure, whereas there does
not seem to be complete justification for any of the existing weighting
heuristics.
    
We begin by reviewing rejection sampling in the geometric context.
Let $M$ be a $d$-dimensional compact submanifold of $\bR^\ambdim$,
with inherited Riemannian metric $d_{M}$ and volume form $\mu$.  We assume that we only have
access to samples of $M$ from a different unknown density $\rho$.
Let $X$ be a sample of points in $M$ drawn with density $\rho$. 
Given a function $\pi\colon M\to [0,1]$, \emph{rejection sampling}
weighted by $\pi$ forms a subsample $Y$ from $X$ where for each point 
$x \in X$, we accept $x$ into $Y$ with probability $\pi(x)$.  If we choose
$\pi(x)$ proportional to $1/\rho(x)$, then $Y$ becomes distributed according to
$\mu$: by Bayes' theorem, for a Borel measurable subset $U \subseteq
M$, for each point $y$ in $Y$, the probability that $y$ belongs to $U$ is
\[
\frac{\int_U \pi(x) \rho(x) \, d\mu(x)}{\int_M \pi(x) \rho(x) \, d\mu(x)}=\frac{\mu(U)}{\mu(M)}.
\]

Of course, the density $\rho$ is not known a priori, so we have to
estimate it. The problem of density estimation is classical and
well-studied.  For instance, density estimators based on constructing
an empirical density function go back to Fix and Hodges
\cite{Fix1951}, Rosenblatt \cite{Rosenblatt1956}, and Parzen
\cite{Parzen1962}.  Estimators based on nearest neighbor distances
were originally proposed by Loftsgaarden and Quesenberry
\cite{Loftsgaarden1965}.  For textbook references on density
estimation, see \cite{Devroye1987, Scott2015}.  Recently, there has
been renewed interest in these types of density estimates and their
extensions in the context of geometric inference; see \cite{Biau2011,
Chazal2011a}.  Moreover, density estimation on unknown submanifolds
has also been specifically explored in~\cite{Berenfeld2021, Hickok}.
We do not claim particular originality for our discussion below; the
result we prove is very similar to existing results, slightly
generalized to fit our specific setup.

For simplicity, we work with a density estimator of the following form.  Fix
$\epsilon > 0$.  Let 
\begin{equation}\label{eq:densityestimator}
\hat{\rho}_\epsilon(x) = \frac{1}{V_d
  \epsilon^d} \frac{\card({X \cap B_\epsilon(x)})}{\card{X}},
\end{equation}
where $B_\epsilon(x)$ is a ball of radius $\epsilon$ centered at $x$ in
$\bR^\ambdim$ and $V_d$ is the volume of the $d$-dimensional unit ball.  A
different approach is to use density estimators based on codensity or
kernel methods; we found in practice that the basic density
estimator above was effective enough.

To establish the consistency of this estimator, we begin with an
elementary geometric lemma estimating the measure of a ball on the
manifold.  Recall that the \emph{reach}
of a smooth compact submanifold $M$ in $\bR^{\ambdim}$ is
the supremum of the real numbers $r$ such that every point in the
$r$-neighborhood of $M$ has a unique closest point to $M$.

\begin{lem}
Let $\epsilon$ be smaller than the reach of $M$.  Then for $x \in M$,
\[
\frac{\mu(M \cap B_\epsilon(x))}{V_d \epsilon^d} = 1 + O(\epsilon).
\] 
\end{lem}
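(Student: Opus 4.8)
The plan is to estimate $\mu(M \cap B_\epsilon(x))$ by comparing $M$ near $x$ to its tangent plane $T_x M$, using the reach bound to control the deviation. First I would set up normal coordinates: let $P = T_x M$ (a $d$-dimensional affine subspace of $\bR^\ambdim$ through $x$), and let $\pi_P$ denote orthogonal projection onto $P$. The graph of $M$ over a neighborhood of $x$ in $P$ is given by a smooth function $\Phi\colon P \supseteq \Omega \to P^\perp$ with $\Phi(x) = 0$ and $D\Phi(x) = 0$. The hypothesis $\epsilon < \operatorname{reach}(M)$ is precisely what guarantees that such a graph parametrization exists over the relevant scale and, more importantly, gives a quantitative bound on the second fundamental form: a standard consequence of the reach bound (see e.g.\ Federer's work on sets of positive reach, or the treatments in \cite{Berenfeld2021, Hickok}) is that $\|\Phi(u) - \Phi(x)\| \le \|u - x\|^2 / (2\operatorname{reach}(M))$ and the Jacobian factor $\sqrt{\det(I + D\Phi^T D\Phi)}$ of the graph parametrization satisfies $1 \le \sqrt{\det(\cdots)} = 1 + O(\epsilon^2)$ on the ball of radius $\epsilon$.

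Next I would compare three quantities: (i) the Euclidean ball $B_\epsilon(x)$ in $\bR^\ambdim$, intersected with $M$; (ii) the region of $M$ lying over the disk $D_\epsilon = P \cap B_\epsilon(x)$ of radius $\epsilon$ in the tangent plane; and (iii) the flat volume $V_d \epsilon^d$ of that disk. For the comparison of (i) and (ii): a point $y \in M$ close to $x$ with $\pi_P(y) = u$ satisfies $\|y - x\|^2 = \|u - x\|^2 + \|\Phi(u)\|^2 = \|u-x\|^2 + O(\|u-x\|^4)$, so $y \in B_\epsilon(x)$ iff $\|u - x\| \le \epsilon(1 + O(\epsilon^2))$; hence the symmetric difference between $M \cap B_\epsilon(x)$ and the graph over $D_\epsilon$ has $\mu$-measure $O(\epsilon^{d+2})$, which is absorbed into the error. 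For the comparison of (ii) and (iii): integrating the Jacobian factor $1 + O(\epsilon^2)$ over $D_\epsilon$ gives $\mu(\text{graph over } D_\epsilon) = V_d \epsilon^d (1 + O(\epsilon^2))$. Combining, $\mu(M \cap B_\epsilon(x)) = V_d \epsilon^d (1 + O(\epsilon^2))$, which is even a bit stronger than the claimed $1 + O(\epsilon)$; I would state the lemma as written since $O(\epsilon)$ is all that is needed downstream.

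The one point requiring care — and the main (minor) obstacle — is the uniformity of the implied constants: the $O(\epsilon)$ must be uniform in $x \in M$, not just pointwise. This is where compactness of $M$ enters. The curvature bound from the reach is already uniform (the reach is a single global constant), and compactness ensures the graph parametrization has a uniform domain size and that all higher-order terms in the Taylor expansions are controlled uniformly; so the implied constant depends only on $d$ and $\operatorname{reach}(M)$. I would remark that this uniformity is exactly what makes the lemma usable in the subsequent consistency proof for $\hat\rho_\epsilon$, where one needs the estimate to hold simultaneously across all sample points.
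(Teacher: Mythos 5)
Your argument is correct, but it takes a different route from the paper. You work extrinsically: you write $M$ near $x$ as a graph over the tangent plane $T_xM$, use the reach to bound the height function and the Jacobian $\sqrt{\det(I + D\Phi^{T}D\Phi)} = 1 + O(\epsilon^{2})$, and then account for the $O(\epsilon^{d+2})$ discrepancy between the Euclidean ball and the cylinder over the tangent disk. The paper instead argues intrinsically: it sandwiches $\mu(M \cap B_\epsilon(x))$ between the volumes of geodesic balls $B^{M}_{\epsilon}(x)$ and $B^{M}_{\epsilon'}(x)$ with $\epsilon' = \epsilon + O(\epsilon^{2})$ (using the standard comparison $d_M(x,y) \le d_{\bR^n}(x,y) + O(d_{\bR^n}(x,y)^{2})$ below the reach scale), and then invokes the classical Bertrand--Diguet--Puiseux/Gray expansion $\mu(B^{M}_{\epsilon}(x)) = V_d\epsilon^{d}\bigl(1 - \tfrac{S(x)}{6(d+2)}\epsilon^{2} + O(\epsilon^{4})\bigr)$. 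Your approach is more self-contained (no appeal to the geodesic-ball volume expansion), makes the dependence on the reach explicit, and yields the sharper error $O(\epsilon^{2})$; the paper's is shorter given the known expansion but loses a factor to the sandwiching step. Your closing remark about uniformity over $x\in M$ via compactness is a genuine point that the paper leaves implicit and that is indeed needed in the consistency proof. One step you should make explicit rather than assert: that \emph{all} of $M \cap B_\epsilon(x)$ lies on the single graph sheet over $T_xM$ (no other branch of $M$ enters the ball). This follows from Federer's bound $\mathrm{dist}(y - x, T_xM) \le \norm{y-x}^{2}/(2\operatorname{reach}(M))$ together with injectivity of the tangent-plane projection at scales below the reach, but it is exactly where the hypothesis $\epsilon < \operatorname{reach}(M)$ does its work, so it deserves a citation or a line of proof; this is a presentational gap, not a mathematical one.
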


\begin{proof}% Find a clear proof or (better) just a reference
We estimate the volume $\mu(M \cap B_\epsilon(x))$ as follows.
First, observe that there exists $\epsilon' = \epsilon + O(\epsilon^2)$ such 
that
\[
\mu(B^M_\epsilon(x)) \leq \mu(M \cap B_\epsilon(x)) \leq
\mu(B^M_{\epsilon'}(x)).
\]
The first inequality is immediate and the second follows from the fact that if the Euclidean distance from $y \in M$ to $x$
is less than twice the reach, then 
\[
d_M(x,y) \leq d_{\bR^n}(x,y) + O(d_{\bR^n}(x,y)^2).
\]
(One can give a direct proof by estimating arclength and using the
exponential map; see
e.g.,~\cite[Lemma~3]{Boissonnat2019},\cite[Lemma~4.3]{Belkin2008}, or
\cite[Lemma~3]{Bernstein2000}).  Therefore, $M \cap B_\epsilon(x)
\subseteq B^M_{\epsilon'}(x)$ for $\epsilon'$ large enough and on the
order of $\epsilon + O(\epsilon^2)$, which implies the inequality.

Next, we can estimate $\mu(B^M_\epsilon(x))$ in terms of
$\mu(B^{\bR^d}_\epsilon) = V_d \epsilon^d$ and higher corrections
coming from the scalar curvature $S$:
\[
\mu(B^M_\epsilon(x)) = V_d \epsilon^d \left(1 - \frac{S(x)}{6(d+2)}
\epsilon^2 + O(\epsilon^4)\right).
\]
As a consequence, we have that
\[
\mu(B^M_\epsilon(x)) = V_d \epsilon^d (1 + O(\epsilon)).
\]
Putting this all together, we conclude that
\[
\mu(M \cap B_\epsilon(x)) = V_d \epsilon^d (1 + O(\epsilon)).
\qedhere
\]
\end{proof}

Applying the lemma, we can now prove the consistency of the density
estimator. 

\begin{prop}\label{prop:DensityEstimatorConsistency}
Suppose $M$ is a compact $d$-dimensional smooth submanifold of $\bR^n$
with inherited volume form $\mu$.  Let $\rho$ be a density
supported on all of $M$.  Let $X$ be a collection of iid samples drawn
from $\rho$, and $\epsilon$ a bandwidth parameter.  Then for all $x
\in M$ that are points of continuity for $\rho$, the quantity
$\hat{\rho}_\epsilon(x)$ converges in probability to $\rho(x)$ as
$\card{X} \to \infty$, provided $\epsilon \to 0$ and $\epsilon
(\card{X})^{1/2d} \to \infty$.
\end{prop}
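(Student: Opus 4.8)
The plan is to prove this by a standard bias-variance decomposition for the estimator $\hat\rho_\epsilon(x)$, using the preceding lemma to control the bias and a direct moment computation (Chebyshev) to control the variance. First I would set $p_\epsilon(x) = \mu(M \cap B_\epsilon(x))/\mu(M)$... wait, since $\rho$ is a probability density against $\mu$, I should instead write $p_\epsilon(x) = \int_{M \cap B_\epsilon(x)} \rho \, d\mu$, the probability that a single sample lands in $B_\epsilon(x)$. Then $\card(X \cap B_\epsilon(x))$ is $\mathrm{Binomial}(\card X, p_\epsilon(x))$, so $\mathbb{E}[\hat\rho_\epsilon(x)] = p_\epsilon(x)/(V_d \epsilon^d)$ and $\mathrm{Var}(\hat\rho_\epsilon(x)) = p_\epsilon(x)(1-p_\epsilon(x))/(\card X \, V_d^2 \epsilon^{2d})$.

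For the bias, I would argue that at a point of continuity $x$ of $\rho$, for $\epsilon$ smaller than the reach,
\[
p_\epsilon(x) = \int_{M \cap B_\epsilon(x)} \rho \, d\mu = \rho(x)\,\mu(M \cap B_\epsilon(x)) + o\!\left(\mu(M \cap B_\epsilon(x))\right),
\]
where the error term uses continuity of $\rho$ at $x$ (splitting $\rho = \rho(x) + (\rho - \rho(x))$ and bounding the second piece by $\sup_{B_\epsilon(x) \cap M}|\rho - \rho(x)| \to 0$). Combining with the lemma, $\mu(M \cap B_\epsilon(x)) = V_d \epsilon^d(1 + O(\epsilon))$, so $\mathbb{E}[\hat\rho_\epsilon(x)] = \rho(x)(1 + O(\epsilon)) + o(1) \to \rho(x)$ as $\epsilon \to 0$. (Here I only need $o(1)$, not the sharp $O(\epsilon)$ rate, since the hypothesis only asks for $\epsilon \to 0$.)

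For the variance, since $p_\epsilon(x) \le V_d \epsilon^d (\rho(x) + o(1))$, hmm, more simply $p_\epsilon(x) = O(\epsilon^d)$ near $x$, we get
\[
\mathrm{Var}(\hat\rho_\epsilon(x)) \le \frac{p_\epsilon(x)}{\card X \, V_d^2 \epsilon^{2d}} = O\!\left(\frac{1}{\card X \, \epsilon^d}\right).
\]
The stated scaling hypothesis $\epsilon(\card X)^{1/2d} \to \infty$ is exactly $\card X \, \epsilon^{2d} \to \infty$; since $\epsilon \to 0$ forces $\epsilon^d \le \epsilon^{2d}$ eventually... actually $\epsilon^d \ge \epsilon^{2d}$ for small $\epsilon$, so $\card X \epsilon^d \ge \card X \epsilon^{2d} \to \infty$, hence the variance tends to $0$. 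Then by Chebyshev, $\hat\rho_\epsilon(x) - \mathbb{E}[\hat\rho_\epsilon(x)] \to 0$ in probability, and combined with the bias estimate $\mathbb{E}[\hat\rho_\epsilon(x)] \to \rho(x)$ we conclude $\hat\rho_\epsilon(x) \to \rho(x)$ in probability.

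The only mildly delicate point — the "main obstacle," though it is more bookkeeping than difficulty — is justifying the bias step cleanly: one must be careful that the lemma gives the measure of $M \cap B_\epsilon(x)$ with the ambient ball, and that continuity of $\rho$ is used on $M \cap B_\epsilon(x)$ (where $d_{\mathbb{R}^n}(x,y) \le \epsilon$ implies $d_M(x,y) \to 0$ for $\epsilon$ below the reach, by the arclength comparison cited in the lemma's proof, so continuity in the intrinsic metric transfers to the set we are integrating over). Everything else is a routine application of Chebyshev's inequality to a binomial, so I would not belabor it.
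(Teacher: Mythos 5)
Your proof is correct, and it follows the same overall skeleton as the paper's: control the bias of $\hat{\rho}_\epsilon(x)$ using the volume lemma together with continuity of $\rho$ at $x$, then show the empirical count concentrates around its mean. The one substantive difference is the concentration tool. The paper bounds $\mathbb{P}\bigl(\abs{\hat{\rho}_\epsilon(x)-\rho(x)}\geq t\bigr)$ by absorbing the bias into the threshold and applying Hoeffding's inequality to the empirical proportion $\card({X\cap B_\epsilon(x)})/\card{X}$, which yields the tail bound $2\exp\bigl(-2\,\card{X}\,(V_d\epsilon^d(t+O(\epsilon)))^2\bigr)$ and hence needs exactly the stated condition $\card{X}\,\epsilon^{2d}\to\infty$. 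You instead compute the exact binomial variance and apply Chebyshev; because the Bernoulli variance $p_\epsilon(x)(1-p_\epsilon(x))\leq p_\epsilon(x)=O(\epsilon^d)$ is small, your bound $\operatorname{Var}(\hat{\rho}_\epsilon(x))=O(1/(\card{X}\,\epsilon^{d}))$ actually only requires the weaker condition $\card{X}\,\epsilon^{d}\to\infty$ (equivalently $\epsilon(\card{X})^{1/d}\to\infty$), so your argument proves a slightly stronger statement than the one claimed; the stated hypothesis $\epsilon(\card{X})^{1/2d}\to\infty$ implies it since $\epsilon^{d}\geq\epsilon^{2d}$ for $\epsilon<1$, as you note. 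The trade-off is that Hoeffding gives an exponential tail while Chebyshev gives only a polynomial one, but for convergence in probability either suffices. Your attention to the point that continuity of $\rho$ must be applied on $M\cap B_\epsilon(x)$ (with the ambient ball controlling the intrinsic distance below the reach) is exactly the bookkeeping the paper elides, and it is handled correctly.
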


\begin{proof}
    The idea is that the law of large numbers implies \[\lim_{\card{X}
      \to \infty} \frac{\card({X \cap B_\epsilon(x)})}{\card{X}} = \int_{M
      \cap B_\epsilon(x)} \rho \, d\mu.\]  Let us make this more
    precise.  First, using the preceding lemma and the continuity of
    $\rho$ we have 
    \begin{equation*}
        \frac{1}{V_d \epsilon^d} \int_{M \cap B_\epsilon(x)} \rho \, d\mu = \frac{1 + O(\epsilon)}{\mu(M \cap B_\epsilon(x))} \int_{M \cap B_\epsilon(x)} \rho \, d\mu \xrightarrow{\epsilon \to 0} \rho(x).
    \end{equation*}
    
    Hence for $t > 0$, we have
    \begin{align*}
        &\mathbb{P}(\abs{\hat{\rho}_\epsilon(x) - \rho(x)} \geq t) \\
        &\leq \mathbb{P}\left(\abs{\hat{\rho}_\epsilon(x) - \frac{1}{V_d \epsilon^d} \int_{M \cap B_\epsilon(x)} \rho \, d\mu} \geq t + O(\epsilon)\right) \\
        &= \mathbb{P}\left(\abs{\frac{\card({X \cap B_\epsilon(x)})}{\card{X}} -  \int_{M \cap B_\epsilon(x)} \rho \, d\mu} \geq V_d \epsilon^d (t + O(\epsilon))\right) \\
        &\leq 2 \exp\left(-2 (\card{X}) (V_d \epsilon^d (t + O(\epsilon)))^2\right)
    \end{align*}
    
    \noindent where the last line uses Hoeffding's inequality.  
    
    Consequently, if $\epsilon (\card{X})^{1/2d} \to \infty$, then
$\hat{\rho}_\epsilon(x)$ converges to $\rho(x)$ in probability as
$\card{X} \to \infty$ and $\epsilon \to 0$.
\end{proof}

\begin{rem}
To get a \emph{point} estimate of density, we
need $\epsilon$ to tend to zero, but the formula in the proposition
indicates that it cannot go to zero too quickly, and the rate at which
it can go to $0$ is controlled by the dimension $d$.  A larger $d$ forces a slower
approach to zero; this is an example of the so-called \emph{curse of
dimensionality}.
\end{rem}

\begin{rem}
    Recall that we only need an estimator that is proportional to density.  Therefore, we might as well use 
    \begin{equation}\label{eq:countdensityestimator}
        \hat{\rho}_\epsilon(x) = \card({X \cap B_\epsilon(x)})
    \end{equation}
    \noindent instead, i.e., simply count the number of points in an
$\epsilon$-neighborhood of $x \in \mathbb{R}^n$ without dividing by the total number of points and the volume
of a standard ball.  In particular, while the consistency of the estimator may depend on the intrinsic dimension $d$ of the underlying manifold, the estimator itself does not, up to global scaling.  
\end{rem}

\begin{rem}
We offer a heuristic for choosing the bandwidth parameter
$\epsilon$ for the density estimator.  According to the multivariate
\emph{Scott's rule} \cite{Scott2015}, we can set $$\epsilon =
\hat{\sigma} \cdot (\card{X})^{-1/(d+4)},$$ where $\hat{\sigma}^2 =
\det(\hat{\Sigma})^{1/d}$ is the geometric mean of the eigenvalues of
the positive-definite sample covariance matrix $\hat{\Sigma}$ of $X$.

For this rule to satisfy the conditions of Proposition
\ref{prop:DensityEstimatorConsistency}, we would
need $$(\card{X})^{1/(d+4) - 1/(2d)} \ll \hat{\sigma} \ll
(\card{X})^{1/(d+4)}$$ asymptotically.  In particular, if $d$ is not
sufficiently small, then surprisingly $\hat{\sigma}$ has to
\emph{grow} with $\card{X}$.  Hence, this heuristic is perhaps best
used when the intrinsic dimension $d$ is small.   

Alternatively, the value of the hyperparameter $\epsilon$ can be
manually tuned, for example by maximizing the mutual information
metric we describe in Section \ref{ss:mi}.
\end{rem}

The above discussion provides the theoretical justification for the
following standard rejection sampling procedure to generate an
ensemble of subsamples on which to produce circular coordinates.

\startcode
\textbf{Input:} A data set $X\subset \bR^{\ambdim}$.

\textbf{Step 1:} For each point $x\in X$, compute $\hat\rho_{\epsilon}(x)$
by the formula~\eqref{eq:countdensityestimator}.  We define $\pi\colon X\to [0,1]$ to be
$x \mapsto m/\hat\rho_{\epsilon}(x)$ where $m \leq \min_x \hat{\rho}_\epsilon(x)$ is chosen to guarantee
a given expected subsample size.

\textbf{Step 2:} For $i=1,\dotsc,k$, create $X_{i}$ as follows.
For each point $z \in X$, draw a uniform random number $\tau_{i,z} \in
[0,1]$ and accept $z$ in $X_{i}$ if and only if $\tau_{i,z} \leq
\pi(x)$.

\textbf{Step 3:} Return subsamples $X_{1},\dotsc,X_{k}$.

\stopcode

\section{Averaging coordinates with Procrustes alignment and hill climbing}
\label{sec:procrustes}

The purpose of this section is to explain an algorithm for averaging
circular coordinates produced on subsamples of data.  We do this
by adapting the ``align and average'' framework for manifold learning
from our previous paper~\cite{BCFM}.  The basic outline is that we use the
Procrustes problem to align the data sets and then take a suitable
mean.  We begin by reviewing the Procrustes problem specialized to the
circle.

\begin{defn}\label{defn:procrustes}
The \emph{generalized Procrustes problem on the circle} is the following: given
a set of $k$ input configurations of $n$ ordered points on $S^{1}$, each represented as a function $\Phi_{i} : \{1, \ldots, n\} \to S^1$, determine the
optimal rotations and reflections $g_{1},\dotsc,g_{k}$ and centroid configuration $\Theta$ that
minimizes the loss functional  
    \begin{equation*}
        L((g_{1},\dotsc,g_{k}),\Theta) = 
	\frac{1}{k} \sum_{i=1}^{k}\sum_{j=1}^{n} d_{S^1}(g_i \cdot \Phi_i(j),\Theta(j))^2.
    \end{equation*}
\end{defn}

Algorithms to find approximate solutions to the corresponding problem
in Euclidean space go back more than 50 years, and in~\cite{BCFM} the
authors used such algorithms to average dimensionality-reducing
embeddings on subsamples.  In contrast to the current paper,
\cite{BCFM} did not perform density equalization and assumed large
overlap between subsamples; the Procrustes solution algorithms extend
to this ``missing values'' case.  In this paper,
subsamples are not expected to have large overlap, and Procrustes
solution algorithms do not readily adapt to this context:
when data sets do not overlap, it does not make sense to ask for an
alignment.  Therefore, the problem and approach in this paper differ from
\cite{BCFM} in two fundamental ways: first, we study circular
coordinates instead of embeddings into Euclidean space,
and second, instead of approximating the solution to the missing
values Procrustes problem on the subsample coordinates, we first extend the
subsample coordinates to the whole data set and then find the
approximate solution to the Procrustes problem on the circle
(Definition~\ref{defn:procrustes}).

Assume we are given subsamples $X_{i}\subset X$, $i=1,\dotsc,k$ each with
coordinates $\phi_i\colon X_i \to S^{1}$.  We assume that $\card{X_i}
\ll \card{X}$ and we do not necessarily assume that any given $x \in X$
is contained in many subsamples.  

Our first step is to extend each
coordinate $\phi_i \colon X_i \to S^{1}$ to all of $X$.  
We extend $\phi_i$ to a circular coordinate $\tilde\phi_{i}\colon X\to S^{1}$ on all of $X$
by a weighted average of $\phi_i$ evaluated at points of $X_i$, using the
\emph{circular mean} for averaging.  Modeling $S^{1}$ as
the unit circle in $\bR^{2}$, we set
\begin{equation}\label{eq:circmean}
\tilde\phi_{i}(x) = \circmean(w,X_{i},\phi_{i},x):=
\left.{\displaystyle \sum_{y \in X_i} w(x, y) \phi_{i}(y)}\middle/
\biggl\Vert \displaystyle \sum_{y \in X_i} w(x, y) \phi_{i}(y) \biggr\Vert\right.,
\end{equation}
for $x\notin X_{i}$. Here $w(x, y)$ is a weighting function; a
standard choice is the Gaussian kernel $w(x,y) = e^{-\beta \norm{x - y}^2}$ for 
some $\beta > 0$.  The circular mean is undefined when the sum
inside the formula is zero and is unstable for values near
zero.  However, for a reasonable initial embedding $\phi_i$, points in
$X_i$ near any given point $x$ land in a small interval of the circle
and $\beta$ could be chosen so that the resulting $\tilde\phi_{i}(x)$ remains
in that interval.  In practice, we took $\beta \approx \epsilon$,
where $\epsilon$ is the scale parameter used in the density estimation
procedure.  In mild abuse
of notation, we continue to denote these extensions by $\phi_i \colon
X \to S^1$. 

The next step is to find an approximate solution for the Procrustes
problem on the circle for the images of the extended embeddings.  This
starts by leveraging solvers for the Euclidean Procrustes problem
to approximate solutions in the circular case.  Specifically, viewing
$S^1$ as a subset of $\bR^2$, we solve the corresponding
Procrustes problem for point clouds in $\bR^{2}$, where the allowed
transformations are restricted to rotations centered on the origin and
reflections through a line through the origin, that is, elements of
$O(2)$.  Although these transformations send the unit circle to
itself, and restricted to the unit circle are the same transformations
we consider in the Procrustes problem on the circle
(Definition~\ref{defn:procrustes}), a solution to the $O(2)$ Procrustes 
problem in $\bR^{2}$ does not necessarily give a solution to the
circle version because (1) the points in the
centroid found lie within but not necessarily on the circle, and (2) the metric on the circle
in the loss function $L$ of Definition~\ref{defn:procrustes}
is the natural arclength metric and not the 
restriction of the metric on $\bR^{2}$. 

We
address the first issue by forcing the centroid $\Theta$ to lie on the circle
by projecting back.  Although numerically unlikely, it is possible
that the origin may appear as one of the points in
the $\bR^{2}$ centroid, and in this case it is a least squares minimization
problem to choose the point for $\Theta$ to be the one that minimizes
the loss function.  For the second issue, we have the following
observation bounding the distances on the circle and distances in
$\bR^{2}$ for a point on the circle and a point inside the circle
projected back to the circle.

\begin{lem}
Let $x,y$ be elements of the unit circle in $\bR^{2}$ and let $r\in [0,1]$, then 
\[
d_{S^{1}}(x,y)\leq \pi d_{\bR^{2}}(x,ry)
\]
\end{lem}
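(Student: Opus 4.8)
The plan is to reduce everything to a one-variable estimate in the angle between $x$ and $y$. First I would write $\theta = d_{S^{1}}(x,y)$, so that $\theta\in[0,\pi]$ is exactly the angle between $x$ and $y$ regarded as unit vectors in $\bR^{2}$, whence $\langle x,y\rangle=\cos\theta$ and
\[
d_{\bR^{2}}(x,ry)^{2}=\norm{x-ry}^{2}=1-2r\cos\theta+r^{2}.
\]
So it suffices to prove that $\norm{x-ry}\geq\theta/\pi$ for every $r\in[0,1]$; multiplying through by $\pi$ then gives the assertion.

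Next I would minimize the quadratic $g(r)=1-2r\cos\theta+r^{2}$ over $r\in[0,1]$, noting its vertex is at $r=\cos\theta$. If $\theta\leq\pi/2$, then $\cos\theta\in[0,1]$, the vertex lies in the interval, and the minimum value is $g(\cos\theta)=\sin^{2}\theta$, so $\norm{x-ry}\geq\sin\theta$. If instead $\theta>\pi/2$, then $\cos\theta<0$, so $g$ is increasing on $[0,1]$ and its minimum is $g(0)=1$, giving $\norm{x-ry}\geq 1$. (Geometrically: the points $ry$ sweep out the segment from the origin to $y$, and when the angle is obtuse the point of this segment closest to $x$ is the origin itself.)

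Finally I would combine the two cases. In the first case, concavity of $\sin$ on $[0,\pi]$ puts its graph above the chord through $(0,0)$ and $(\pi/2,1)$, so Jordan's inequality $\sin\theta\geq\tfrac{2}{\pi}\theta\geq\tfrac{1}{\pi}\theta$ holds on $[0,\pi/2]$, hence $\norm{x-ry}\geq\theta/\pi$. In the second case $\theta\leq\pi$ gives $1\geq\theta/\pi$ directly, so again $\norm{x-ry}\geq\theta/\pi$. This completes the argument; the boundary configurations $x=\pm y$ are just $\theta\in\{0,\pi\}$ and are covered by the same inequalities.

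The one point requiring care — and the main obstacle — is the case split. The quickest-looking bound, namely that $\norm{x-ry}$ is at least the distance from $x$ to the line through the origin and $y$, only yields $\norm{x-ry}\geq\sin\theta$, which degenerates to $0$ as $\theta\to\pi$ even though $d_{S^{1}}(x,y)\to\pi$; it is essential to use that the relevant locus is the segment $[0,y]$ rather than the whole line, so that the bound $\norm{x-ry}\geq 1$ kicks in for obtuse angles.
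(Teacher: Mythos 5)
Your proof is correct and complete. The paper actually states this lemma without giving any proof, so there is nothing to compare against; your argument fills that gap cleanly. The reduction to minimizing $g(r)=1-2r\cos\theta+r^{2}$ over $r\in[0,1]$ with the case split at $\theta=\pi/2$ is exactly the right structure: for acute angles the interior minimum gives $\lVert x-ry\rVert\geq\sin\theta$ and Jordan's inequality $\sin\theta\geq\tfrac{2}{\pi}\theta$ finishes, while for obtuse angles the minimum occurs at $r=0$ and $\lVert x-ry\rVert\geq 1\geq\theta/\pi$. Your closing remark correctly identifies why the naive distance-to-line bound fails near $\theta=\pi$ and why restricting to the segment $[0,y]$ is essential; this is precisely the point that makes the factor $\pi$ (rather than $\pi/2$) necessary in the statement, since the constant must also absorb the worst case $\theta=\pi$, $r=0$ where $d_{S^{1}}(x,y)=\pi$ but $d_{\bR^{2}}(x,0\cdot y)=1$, showing the bound is sharp.
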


As a consequence, we prove that a solution to the $O(2)$ Procrustes problem in
$\bR^{2}$ for configurations on $S^{1}$ will be close to a solution to
the Procrustes problem on the circle, depending on the optimal value of the loss function $L$ of
Definition~\ref{defn:procrustes}.

\begin{prop}
Given $k$ configurations of $n$ points on the circle
$\Phi_{1},\dotsc,\Phi_{k}$, 
let $((g_{1},\dotsc,g_{k}),\Theta)$ be the solution to the generalized
Procrustes problem on the circle, let
$((g'_{1},\dotsc,g'_{k}),\Psi)$ be the solution to the $O(2)$
Procrustes problem on $\bR^{2}$, and let $\Theta'$ be the projection
of $\Psi$ back to the circle as
described above.  Then 
\[ \inf_{h\in O(2)}
d(h\cdot \Theta,\Theta')\leq (1 + \pi) \sqrt{L^*},
\]
\noindent where $L^* = L((g_{1},\dotsc,g_{k}),\Theta)$ is the minimum value of the generalized Procrustes loss on the circle.
\end{prop}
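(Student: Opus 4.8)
The plan is to compare the two optimization problems by using the Euclidean-optimal tuple $((g'_i),\Psi)$ as a \emph{test point} for the circular problem, after projecting $\Psi$ to the circle, and conversely using the circular-optimal tuple as a test point for the Euclidean problem. The quantity we ultimately want to bound, $\inf_{h\in O(2)} d(h\cdot\Theta,\Theta')$, should be controlled by the distance from $\Theta'$ to \emph{any} configuration that is close to both centroids; the natural candidate is $\Theta'$ itself versus $h\cdot\Theta$ for the optimal realigning rotation $h$, and the bridge between them will be the common data configurations $g'_i\cdot\Phi_i$ (or $g_i\cdot\Phi_i$).

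First I would set up notation: write $L$ for the circular loss of Definition~\ref{defn:procrustes} and $\tilde L$ for the analogous $O(2)$-Procrustes loss on $\bR^2$ (same formula but with $d_{\bR^2}$ in place of $d_{S^1}$, and centroid points allowed anywhere in the disk). Since arclength on $S^1$ dominates chordal distance, $\tilde L \le L$ pointwise in the arguments, so $\tilde L((g'_i),\Psi) \le \tilde L((g_i),\Theta) \le L((g_i),\Theta) = L^*$; thus the Euclidean optimum has loss at most $L^*$. Next, I would use the preceding lemma, $d_{S^1}(x,y)\le \pi\, d_{\bR^2}(x,ry)$, to convert distances from the Euclidean centroid $\Psi$ (whose $j$-th point has some radius $r_j\in[0,1]$) to its radial projection $\Theta'$ on the circle: for each $i,j$,
\[
d_{S^1}\bigl(g'_i\cdot\Phi_i(j),\,\Theta'(j)\bigr) \le \pi\, d_{\bR^2}\bigl(g'_i\cdot\Phi_i(j),\,\Psi(j)\bigr),
\]
so that the circular loss $L((g'_i),\Theta')$ is at most $\pi^2\,\tilde L((g'_i),\Psi) \le \pi^2 L^*$. (A small point to check: if $\Psi(j)$ is the origin, the lemma does not directly apply, but then $\Theta'(j)$ is chosen by the least-squares minimization over the circle, which only decreases the circular loss at that index, so the bound still holds; I would remark on this.)

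Now I have two configurations on the circle — the true centroid $\Theta$ realized by $(g_i)$ with per-point errors summing (in squares, averaged over $i$) to $L^*$, and the projected centroid $\Theta'$ realized by $(g'_i)$ with analogous error at most $\pi^2 L^*$ — and I want to bound how far apart $\Theta$ and $\Theta'$ are \emph{up to a global $O(2)$ motion}. The key step is a triangle inequality in the $\ell^2$-metric on configurations: for any fixed index $i$,
\[
\Bigl(\textstyle\sum_j d_{S^1}(\Theta(j),\Theta'(j))^2\Bigr)^{1/2}
\le \Bigl(\textstyle\sum_j d_{S^1}(\Theta(j), g_i\Phi_i(j))^2\Bigr)^{1/2}
+ \Bigl(\textstyle\sum_j d_{S^1}(g_i\Phi_i(j),\Theta'(j))^2\Bigr)^{1/2}.
\]
The first term on the right, after averaging over $i$, is $\sqrt{L^*}$; but to make the second term comparable to $\sqrt{\pi^2 L^*}$ I must account for the fact that $(g_i)$ and $(g'_i)$ are different alignments of the same $\Phi_i$. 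This is exactly where the $O(2)$ slack enters: I would pick $i$ (or better, argue on average over $i$) and let $h_i = g'_i g_i^{-1}\in O(2)$, so that $g'_i\Phi_i = h_i\cdot(g_i\Phi_i)$; then $d_{S^1}(g_i\Phi_i(j),\Theta'(j)) = d_{S^1}(g_i\Phi_i(j), h_i^{-1}\Theta'(j))$, and $\sum_j$ of its square, averaged over $i$, is at most... here one wants it bounded by $\pi^2 L^*$, which holds because $L((g'_i),\Theta') = \frac1k\sum_i\sum_j d_{S^1}(h_i g_i\Phi_i(j),\Theta'(j))^2 \le \pi^2 L^*$. So for the best $i$, both $\frac1n\!\cdot$(its contribution) terms are controlled, and choosing $h = h_i^{-1}$ for that $i$ gives
\[
\inf_{h\in O(2)} \Bigl(\textstyle\sum_j d_{S^1}(h\cdot\Theta(j),\Theta'(j))^2\Bigr)^{1/2}
\le \sqrt{L^*} + \sqrt{\pi^2 L^*} = (1+\pi)\sqrt{L^*},
\]
which is the claimed inequality (interpreting $d(h\cdot\Theta,\Theta')$ as the $\ell^2$ configuration distance). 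The main obstacle I anticipate is precisely this bookkeeping of \emph{which} group elements act where — making sure the same $h_i$ that converts $g_i\Phi_i$ into $g'_i\Phi_i$ is the one applied to $\Theta$, and that passing to the best index $i$ (rather than a genuine average) does not lose a factor of $k$; a clean way to handle it is to note that since the bound $\pi^2 L^*$ holds for the $i$-average of $\sum_j d_{S^1}(g'_i\Phi_i(j),\Theta'(j))^2$, some index $i$ achieves at most that average, and for that $i$ the triangle inequality above goes through verbatim.
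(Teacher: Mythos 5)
Your overall route is the same as the paper's: you compare optimal losses ($\tilde L((g_i'),\Psi)\le \tilde L((g_i),\Theta)\le L^*$ since the chordal metric is dominated by arclength), use the lemma $d_{S^1}(x,y)\le\pi\, d_{\bR^2}(x,ry)$ to get $L((g_i'),\Theta')\le\pi^2 L^*$, and then bridge $\Theta$ and $\Theta'$ through the aligned data configurations by a triangle inequality, absorbing the mismatch between $g_i$ and $g_i'$ into the free element $h\in O(2)$. All of that is correct and is exactly the paper's argument.

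The gap is in your final bookkeeping step. Writing $A_i=\bigl(\sum_j d_{S^1}(g_i\Phi_i(j),\Theta(j))^2\bigr)^{1/2}$ and $B_i=\bigl(\sum_j d_{S^1}(g_i'\Phi_i(j),\Theta'(j))^2\bigr)^{1/2}$, your triangle inequality (with $h=h_i^{-1}=g_ig_i'^{-1}$) gives $\inf_h d(h\cdot\Theta,\Theta')\le A_i+B_i$ for every $i$, and you know $\frac1k\sum_i A_i^2=L^*$ and $\frac1k\sum_i B_i^2\le\pi^2L^*$. Choosing an index $i$ with $B_i^2$ at most its average controls only $B_i$; for that same $i$ the term $A_i$ can be as large as $\sqrt{k}\,\sqrt{L^*}$, so the desired bound does not follow. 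For instance, with $k=2$ the values $A=(\sqrt{2L^*},0)$ and $B=(\pi\sqrt{L^*},\pi\sqrt{L^*})$ are consistent with both averages, index $1$ satisfies your selection criterion, yet $A_1+B_1=(\sqrt2+\pi)\sqrt{L^*}>(1+\pi)\sqrt{L^*}$. The repair is what the paper does: since $\inf_h d(h\cdot\Theta,\Theta')\le A_i+B_i$ holds for \emph{all} $i$ simultaneously, bound the left-hand side by the quadratic mean $\bigl(\frac1k\sum_i(A_i+B_i)^2\bigr)^{1/2}$ and apply the triangle inequality in $\bR^k$ (Minkowski) to split this as $\bigl(\frac1k\sum_iA_i^2\bigr)^{1/2}+\bigl(\frac1k\sum_iB_i^2\bigr)^{1/2}\le\sqrt{L^*}+\pi\sqrt{L^*}$. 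With that one change your argument is complete and coincides with the paper's proof.
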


\begin{proof}
For clarity, we subscript $d$ with the symbol for the metric space to
denote the respective distances.   By the triangle inequality and
the proposition above, we have
\begin{align*}
\inf_{h\in O(2)}d_{(S^{1})^{n}}(h\cdot \Theta,\Theta')
&=\inf_{h,h'\in O(2)}d_{(S^{1})^{n}}(h\cdot \Theta,h'\cdot \Theta')\\
&\leq \inf_{h,h'}(d_{(S^{1})^{n}}(h\cdot \Theta,\Phi_{i})
   +d_{(S^{1})^{n}}(h'\cdot \Theta',\Phi_{i}))\\
&= \inf_{h,h'}(d_{(S^{1})^{n}}(h \cdot \Phi_{i},\Theta)
   + d_{(S^{1})^{n}}(h'\cdot \Phi_{i},\Theta'))\\
&= \inf_{h} d_{(S^{1})^{n}}(h \cdot \Phi_{i},\Theta)
   +\inf_{h} d_{(S^{1})^{n}}(h \cdot \Phi_{i},\Theta')\\
&\leq \inf_{h} d_{(S^{1})^{n}}(h\cdot \Phi_{i},\Theta)
   +\inf_{h} \pi d_{(\bR^{2})^{n}}(h\cdot \Phi_{i},\Psi),
\end{align*}
where we use the $\ell^{2}$ distances on the product metric spaces.
Looking at these as coordinates for elements of
$\bR^{k}$ (as $i$ varies from 1 to $k$), and applying the triangle
inequality in $\bR^{k}$, we get 
\begin{equation}\label{eq:Pest}
\begin{aligned}
&\inf_{h}d_{(S^{1})^{n}}(h\cdot \Theta,\Theta')
=\frac1{\sqrt{k}}\left(\sum_{i=1}^{k}
\inf_{h}d_{(S^{1})^{n}}(h\cdot \Theta,\Theta')^{2}
\right)^{1/2}\\
&\qquad\leq 
\frac1{\sqrt{k}}\left(\sum_{i=1}^{k} (\inf_{h}d_{(S^{1})^{n}}(h\cdot \Phi_{i},\Theta)
   +\inf_{h} \pi d_{(\bR^{2})^{n}}(h\cdot \Phi_{i},\Psi))^{2}
\right)^{1/2}\\
&\qquad\leq \frac1{\sqrt{k}}\left(\sum_{i=1}^{k} \inf_{h}d_{(S^{1})^{n}}(h\cdot \Phi_{i},\Theta)^{2}
\right)^{1/2} + \frac\pi{\sqrt{k}}\left( \sum_{i=1}^{k}
   \inf_{h} d_{(\bR^{2})^{n}}(h\cdot \Phi_{i},\Psi)^{2}
\right)^{1/2}\\
&\qquad= \left(\frac1k\sum_{i=1}^{k} \inf_{h}d_{(S^{1})^{n}}(h\cdot \Phi_{i},\Theta)^{2}
\right)^{1/2} + \pi\left( \frac1k \sum_{i=1}^{k}
   \inf_{h} d_{(\bR^{2})^{n}}(h\cdot \Phi_{i},\Psi)^{2}
\right)^{1/2}
\end{aligned}
\end{equation}
The first term on the last line is then
$\sqrt{L^*} = (L((g_{1},\dotsc,g_{k}),\Theta))^{1/2}$, since
by hypothesis $((g_{1},\dotsc,g_{k}),\Theta)$ is the solution to the generalized
Procrustes problem on the circle for the configurations
$\Phi_{1},\dotsc,\Phi_{k}$.  For the second term, write
$L_{\bR^{2}}$ for the Procrustes loss functional in~$\bR^{2}$,
\[
L_{\bR^{2}}((h_{1},\dotsc,h_{k}),\Xi):= \frac{1}{k} \sum_{i=1}^{k}\sum_{j=1}^{n}
d_{\bR^{2}}(h_{i}\cdot \Phi_{i}(j),\Xi(j))^{2}
=\frac{1}{k} \sum_{i=1}^{k} d_{(\bR^{2})^{n}}(h_{i}\cdot \Phi_{i},\Xi)^{2}.
\]
Since 
$((g'_{1},\dotsc,g'_{k}),\Psi)$ is the solution to the $O(2)$ Procrustes
problem in $\bR^{2}$ for the configurations
$\Phi_{1},\dotsc,\Phi_{k}$, we have that 
\[
\frac{1}{k} \sum_{i=1}^{k}
   \inf_{h} d_{(\bR^{2})^{n}}(h\cdot \Phi_{i},\Psi)^{2}=
L_{\bR^{2}}((g'_{1},\dotsc,g'_{k}),\Psi)\leq
L_{\bR^{2}}((g_{1},\dotsc,g_{k}),\Theta)
\]
and since the standard and Euclidean metrics on the circle satisfy
$d_{\bR^{2}}\leq d_{S^{1}}$, we have
\[
L_{\bR^{2}}((g_{1},\dotsc,g_{k}),\Theta)\leq
L((g_{1},\dotsc,g_{k}),\Theta) = L^*.
\]
Plugging these inequalities back into~\eqref{eq:Pest}, we get
the inequality in the statement. 
\end{proof}

The purpose of the approximate solution using the $O(2)$ Procrustes
problem in $\bR^{2}$ is that solvers for this problem use non-local
methods, which can jump out of basins of attraction to which local methods like
hill climbing are confined.  This is key because $O(2)^k$ is not connected and the error surface of the loss function is not convex
within connected regions.  However, once we have an approximate solution with an
exact solution nearby, hill climbing methods then can effectively zero
in on the exact solution.  

Our algorithm starts with an approximate
solution from the $O(2)$ Procrustes problem in $\bR^{2}$, constructed
for example using the algorithm of ten Berge~\cite[p. 272]{TenBerge1977}
as a seed position to start hill climbing.
%%%% Hill climbing description goes here %%%%
Then, we hill climb by iteratively rotating according to a learning rate
schedule (i.e., a decreasing function of the iteration number) each
of the input configurations as well as each point in the candidate
centroid, choosing among the clockwise, counterclockwise, or no rotation for
each configuration and each point of the candidate centroid giving
the smallest value for the loss function.  We terminate and return the
candidate centroid when some convergence criterion is met (e.g., the
change in the loss function falls below a pre-defined threshold).

We summarize our algorithm as follows:  

\startcode
\textbf{Input:} A data set $X\subset \bR^{\ambdim}$, subsamples $X_{i}\subset
X$, and circular coordinates\break $\phi_{i}\colon X_{i}\to S^{1}$ for
$i=1,\dotsc,k$.

\textbf{Step 1:} Extend each $\phi_{i}$ to a function $\Phi_{i}\colon
X\to S^{1}$ using the circular mean: for $x\in X_{i}$ define
$\Phi_{i}(x)=\phi_{i}(x)$ and for $x\notin X_{i}$, define
$\Phi_i(x)=\circmean(w,X_{i},\phi_{i},x)$ (see~\eqref{eq:circmean} and
text following for description). 

\textbf{Step 2:} Solve the $O(2)$ Procrustes problem in $\bR^2$ for
the configurations\break $\Phi_{1},\dotsc,\Phi_{k}$ via the 
ten Berge algorithm to produce rotation/reflection guesses
$g_{1},\dotsc,g_{k}$ and a centroid configuration $\Theta$
obtained by radially projecting the centroid obtained in $\bR^{2}$ back to the
circle.

\textbf{Step 3:} Use hill climbing starting at these guesses to search
for local minima for the loss function $L$ in
Definition~\ref{defn:procrustes}. 

\textbf{Step 4:} Return the final centroid configuration.

\stopcode

\section{Experimental results}\label{sec:experimental}

We undertook several experiments with both synthetic and real-world
data to test the effectiveness of the algorithm.  In
subsection~\ref{ss:density}, we illustrate with synthetic data what
goes wrong with circular coordinates derived from unevenly sampled
data and observe that the algorithm corrects the coordinates.  In
subsection~\ref{ss:worm}, we review two sets of real-world data on
neuronal recordings from \textit{C. elegans}.  In the first set of these results, we
used our algorithm to re-analyze the dataset from
Kato et al.~\cite{Kato2017} on the relationship of locomotory behavior to
recurrent trajectories in neuronal phase space.  In the second set, we
analyzed recent data from Yemini et al.~\cite{Yemini2021} on the response
of worms to various sensory stimuli.  

For each experiment, we computed and compared two coordinates: the global ``uncorrected coordinate'' obtained by calculating a single circular coordinate on
the data without subsampling and the ``corrected coordinate'' obtained from our proposed approach of subsampling, aligning, and
averaging coordinates.

\subsection{Unbalanced circles and ellipses}\label{ss:density}

To illustrate what can go wrong with circular coordinates for an
uneven density, we generated a synthetic unbalanced circle data set as
follows.  The 
data set consists of 1000 independently generated points in $\bR^{2}$, where the
distance from the origin to each point is normally distributed with
mean $1$ and standard deviation $0.1$ and the angles that these
vectors make with the positive $x$-axis are drawn from a von Mises
distribution with dispersion $1.3$.  This produces samples concentrated around the unit circle with a denser region
near the angle $0=2\pi$ radians. Producing circular coordinates
heavily overweights the denser region; see
Figure~\ref{fig:unbalanced_circle}.

\begin{figure}
    \centering
    \includegraphics[width=0.9\textwidth]{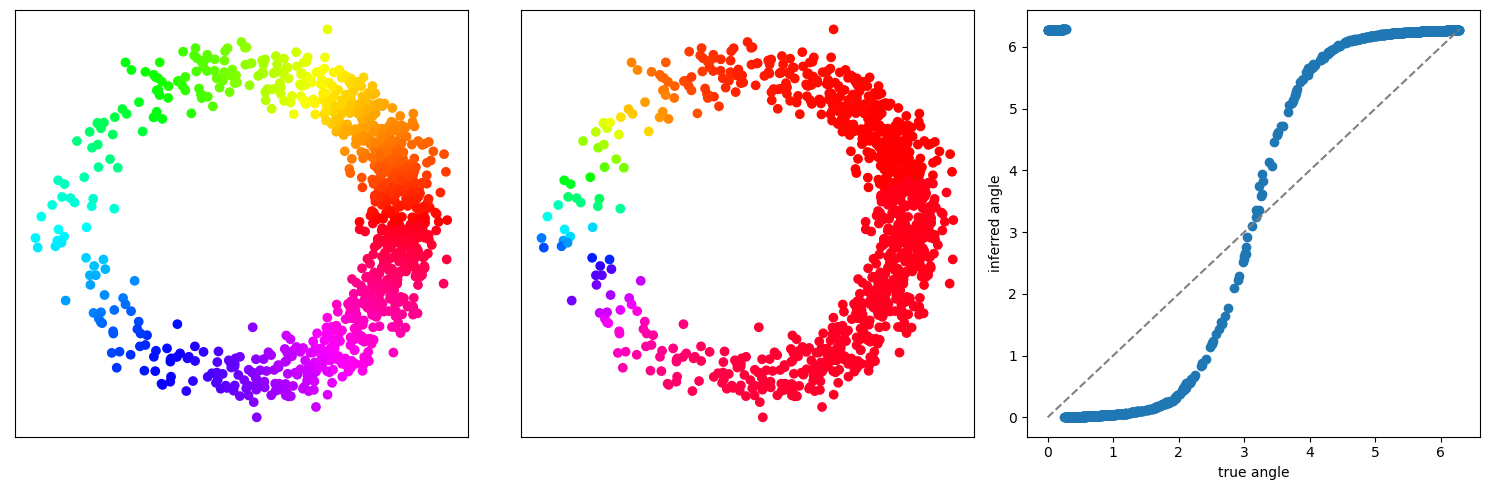} \\%
    \includegraphics[width=0.9\textwidth]{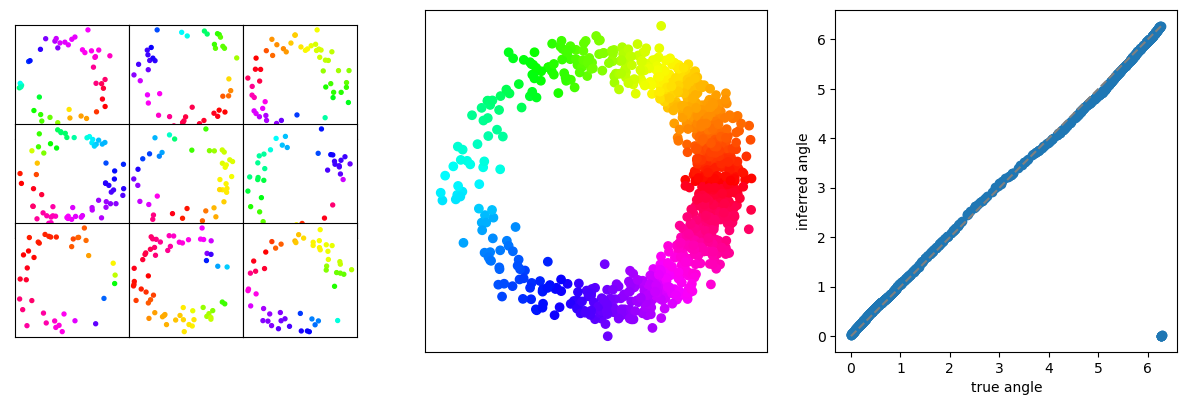}
    \caption{Top: An unbalanced circle dataset colored by the ``true'' angle (left) or the uncorrected coordinate inferred from persistent cohomology (center).  A plot of the uncorrected coordinate against the true coordinate (right); the dotted diagonal line indicates equality.  Bottom: Examples of the subsampled unbalanced circle and their inferred phases (left), the final corrected coordinate obtained by aligning the coordinates assigned to subsamples (center), and a plot of the corrected coordinate against the true coordinate (right).}
    \label{fig:unbalanced_circle}
\end{figure}

Density-based subsampling drastically improves the obtained coordinate
in this artificial example.  Using the count density estimator
$\hat{\rho}_\epsilon$ for $\epsilon$ chosen according to Scott's rule, we generated $30$
density-equalized subsamples with expected size $50$.  Then, we used
persistent cohomology to find circular coordinates and used
generalized orthogonal Procrustes alignment to obtain the corrected
coordinate.  The results are shown in the bottom row of
Figure~\ref{fig:unbalanced_circle}.  The agreement between the true
angle and 
the inferred angle is much better in this case.  

Next, we considered less symmetric examples where the reach of the
manifold becomes involved: unbalanced ellipses that are denser near one
vertex.  The procedure for generating these point clouds is the same
for the unbalanced circle, except that we applied a dilation factor of
$1.6$ in the $x$ direction; see the top row of
Figure~\ref{fig:unbalanced_ellipse}.  The inferred coordinate does not
recover the arc length exactly, but nonetheless we still see a
significant improvement using the corrected coordinates.  See the bottom row of Figure~\ref{fig:unbalanced_ellipse}. 
    
\begin{figure}
    \centering
    \includegraphics[width=0.9\textwidth]{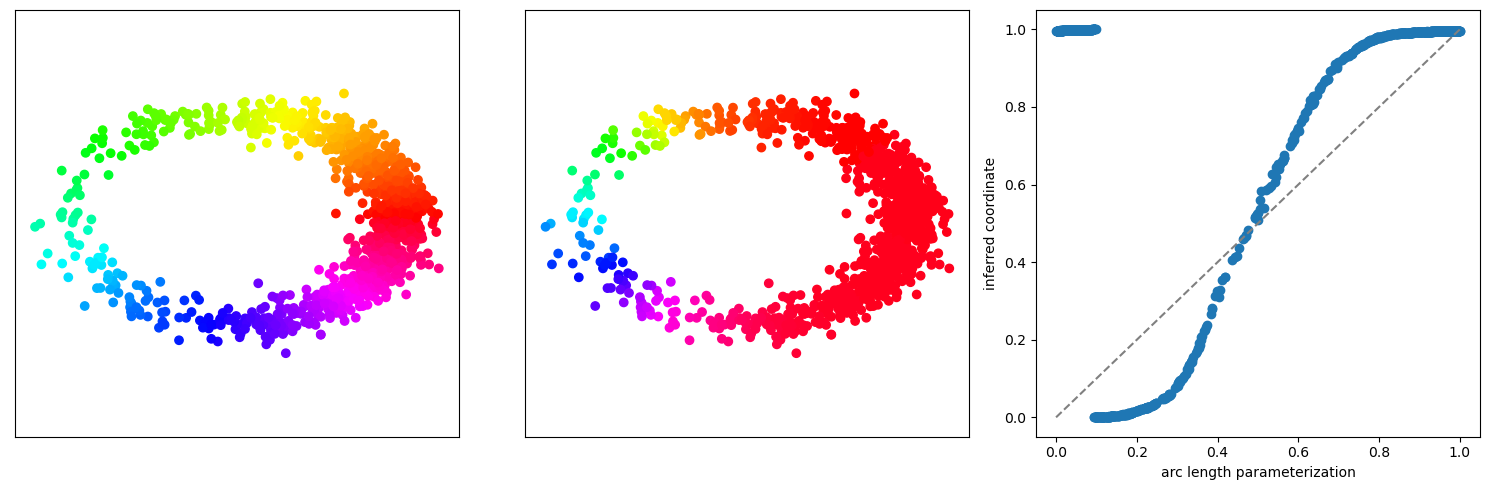} \\%
    \includegraphics[width=0.9\textwidth]{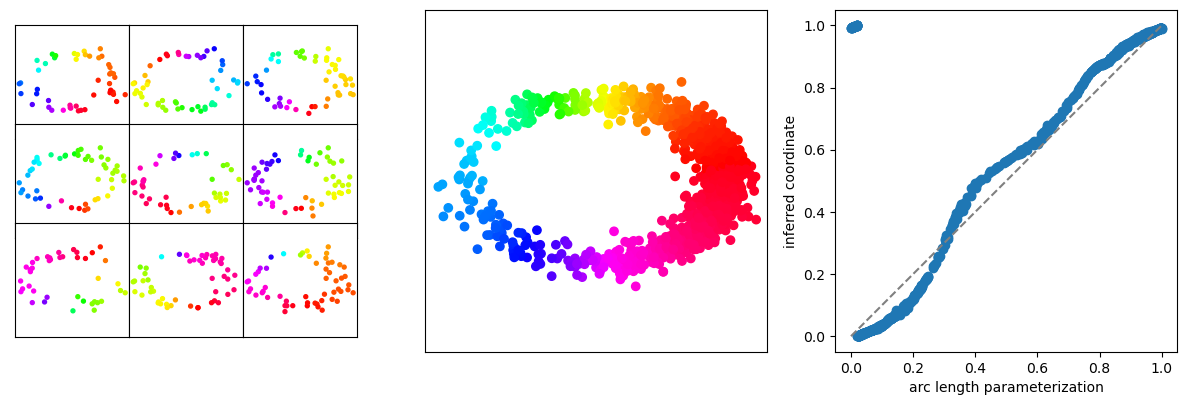}
    \caption{Top: An unbalanced ellipse dataset colored by arc length (left) or the uncorrected coordinate inferred from persistent cohomology (center).  A plot of the uncorrected coordinate against the arc length parametrization (right).  Bottom: Examples of the subsampled unbalanced ellipse and their inferred phases (left), the final corrected coordinate (center), and a plot of the corrected coordinate against the arc length parametrization (right).}
    \label{fig:unbalanced_ellipse}
\end{figure}

\subsection{\textit{C.\ elegans} neuronal recordings}
\label{ss:worm}

\subsubsection{Data from Kato et al.}
We used our method to construct coordinates on the neuronal activity phase
space of the nematode \textit{C.\ elegans}. The Kato et al.\ dataset~\cite{Kato2017} recorded multivariate time
series with features corresponding to the calcium traces of a subset of neurons sampled at a particular frequency.  The animals were
transgenically designed so that the neuron centers fluoresce upon
calcium influx, which is a proxy for neuronal activation.  The
original analysis found that the different regions
of the phase space extracted from this high-dimensional data reflect
the different states of the worm locomotory gait, termed
\emph{sinusoids} or \emph{pirouettes}.  The worm cycles through these
different states as it executes its motor commands; in particular, the
underlying dynamical system is recurrent and we expect the
trajectories to consist of loops in the phase space.
    
Several methods have been applied to study these non-stationary and
high-dimensional neuronal activity trajectories, including adaptive
locally linear segmentation \cite{Costa2019}, hierarchical recurrent
state space models \cite{Linderman2019}, asymmetric diffusion mapping
\cite{Brennan2019, Brennan2023}, dynamic mode decomposition with
control \cite{Fieseler2020}, and nonlinear control
\cite{Morrison2021}.  Beyond a general qualitative description, a
quantitative model of the different states of the worm neuronal manifold
-- continuous or discrete -- is a preliminary step to understand
inter-state transitions and to generate testable hypotheses about the
underlying biological mechanisms that drive and control the dynamical
system and the external factors that may alter it.   

As validation of our method, we applied it to recover these recurrent
patterns and more from the \textit{C. elegans} neuronal activity data in
a flexible manner and perform nonlinear continuous state space
segmentation.  TDA methods have been previously used to study the
activity of neuron ensembles, for example, in head
direction cells \cite{Rybakken2019} or in grid
cells~\cite{Gardner2022} in mice.  Our analysis demonstrates that these techniques are also applicable to
study whole-brain recordings (as opposed to specific groups of neurons
with spatial-processing function) of an invertebrate nervous system.
Furthermore, in this case the manifold of interest has no direct
physical interpretation like head direction or allocentric location,
but rather is an abstract representation of brain state.
    
We reproduced the dimensionally-reduced version of the phase space
from~\cite{Kato2017}.  More
specifically, by phase space here we mean a combination of all the
available activity traces and their derivatives obtained through
total-variation regularization.  Then, we examined the persistent cohomology results to determine the general global shape of the phase space.  For instance, for the worm neuronal trajectory depicted in Figure~\ref{fig:kato-phdgms}(a), we see that there is an obvious large loop that is confirmed by a long-lived class in $PH^1$ with late death time, which we deem suitable for further coordinatization.  On the other hand, for the worm neuronal trajectory shown in Figure~\ref{fig:kato-phdgms}(b), we do not detect a clear persistent large loop but rather multiple smaller loops, which suggests that the shape should be modeled differently: imposing a circular coordinate onto the data in its current form is unlikely to be effective.  This is not to say that this worm is definitely not exhibiting the expected cyclic locomotory brain patterns, but that additional preprocessing of the data is perhaps needed.  This shows one of the advantages of our TDA approach, which separates the
problem of shape detection from shape coordinatization.  While other methods
assume -- correctly or not -- prior knowledge about the general shape of neuronal
trajectories, we can first detect loops in the data \emph{prior} to finding coordinates for them (if they exist).

\begin{figure}
    \centering
    \begin{subfigure}[b]{0.7\textwidth}
        \centering
        \includegraphics[width=\textwidth]{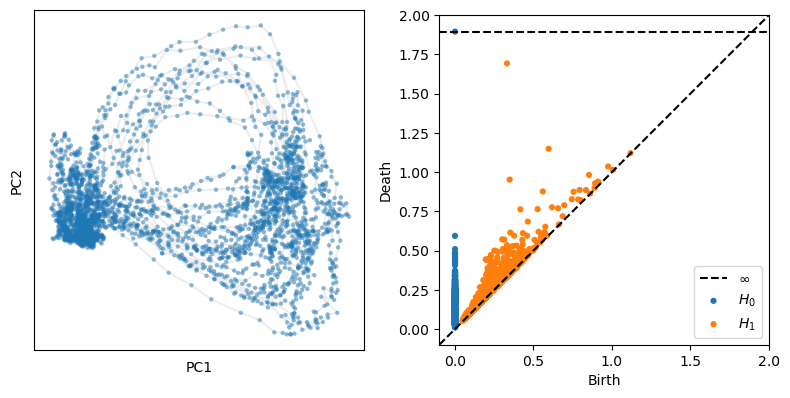}
        \caption{}
    \end{subfigure}\\
    \begin{subfigure}[b]{0.7\textwidth}
        \centering
        \includegraphics[width=\textwidth]{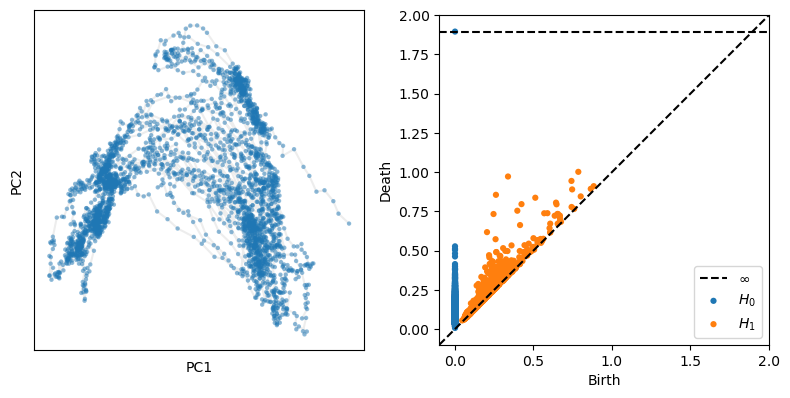}
        \caption{}
    \end{subfigure}
    \caption{The projection of the neuronal trajectories into PCA space (left) and the persistence diagram (right) for two different worms.}
    \label{fig:kato-phdgms}
\end{figure}

For the phase space of the first worm above with the long-lived cohomology class, we calculated the uncorrected and corrected circular coordinates; see Figure~\ref{fig:kato5}.
By visual inspection, the uncorrected coordinate
concentrates more of the phase changes near the top of the loop and
less near the bottom, whereas the corrected coordinate equalize the
size of the phases.  Note that the density imbalance that arises here
is not a mere sampling error but a natural real-life consequence of the brain state prior to the worm's decision to start turning.

\begin{figure}
    \centering
    \begin{subfigure}[b]{0.35\textwidth}
        \centering
        \includegraphics[width=\textwidth]{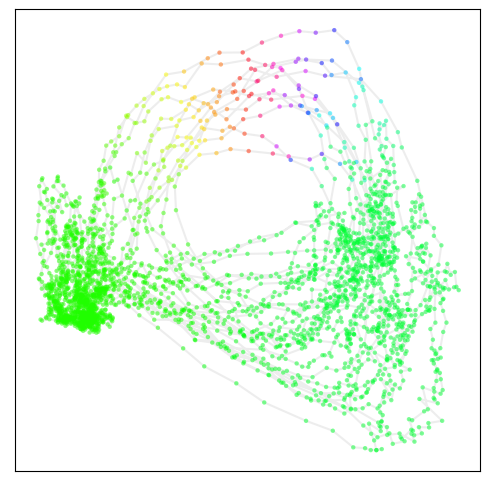}
        \caption{}
    \end{subfigure}
    \hspace{5mm}
    \begin{subfigure}[b]{0.35\textwidth}
        \centering
        \includegraphics[width=\textwidth]{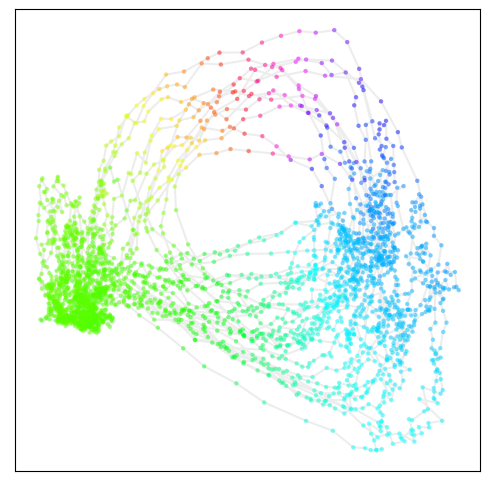}
        \caption{}
    \end{subfigure}
    \caption{The neuronal manifold of \textit{Caenorhabditis
elegans} global brain dynamics, coordinatizing a cyclic
locomotory gait, colored by (a) the uncorrected coordinate or (b)
the corrected coordinate.} 
    \label{fig:kato5}
\end{figure}

For this data set, we also have annotations into discrete brain states
provided in the original article, which were obtained by examining the
trace of a single command interneuron (AVAL) and manually setting
thresholds.  Figure~\ref{fig:kato5-comparison} compares the continuous
coordinates inferred using our approach with the original annotations
by plotting the coordinates in the tangent bundle $TS^1 \cong S^1
\times \bR$ as a polar plot; the tangent vectors were obtained from
numerically differentiating the coordinates.  Overall, we see
coordinate values from the same state are clustered together, which
demonstrates agreement between the new and old results.  This shows
that our algorithm recovers the model of the brain state manifold
proposed in \cite{Kato2017}, c.f.\ Figure 4(E) in that paper.  We also
see that the uncorrected coordinate tends to obscure the smaller loop
corresponding to ventral turns, whereas the corrected coordinate
manages to recover this second loop more clearly.
    
\begin{figure}
    \centering
    \begin{subfigure}[b]{0.35\textwidth}
        \centering
        \includegraphics[width=\textwidth]{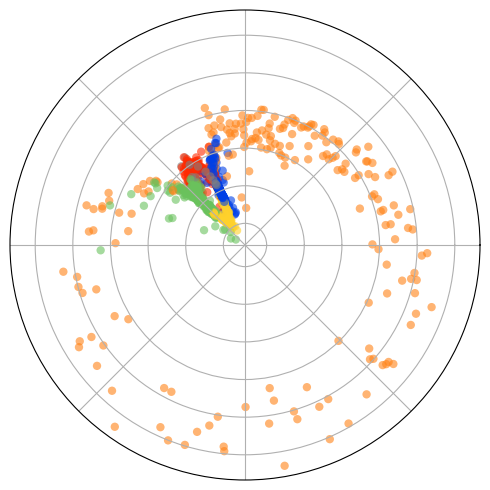}
        \caption{}
    \end{subfigure}
    \hspace{5mm}
    \begin{subfigure}[b]{0.35\textwidth}
        \centering
        \includegraphics[width=\textwidth]{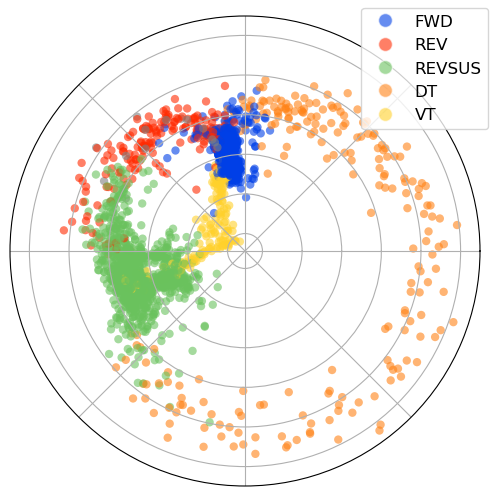}
        \caption{}
    \end{subfigure}   
    \caption{Comparison between the labels from \cite{Kato2017} and our
inferred coordinates, either (a) uncorrected or (b) corrected.  Points are colored according to the discrete
categories of the provided labels: forward (FWD), reversal (REV),
sustained reversal (REVSUS), dorsal turn (DT), and ventral turn
(VT).}
    \label{fig:kato5-comparison}
\end{figure}

\subsubsection{Data from Yemini et al.}
We also applied our methods to the more recent, more expansive
\textit{C.\ elegans} recordings obtained from Yemini et al.~\cite{Yemini2021}.
These are calcium trace recordings of approximately four minutes in
length per animal, sampled at 4 Hz.  In contrast to the recordings
analyzed in~\cite{Kato2017}, over a hundred neurons were recorded
simultaneously with individual resolution.  Furthermore, the animals
being recorded are subjected to various olfactory and gustatory
stimuli at minute intervals during the experiment.  The rapid
changes in the environment relative to the short durations of the
recordings complicate the analysis, but we nonetheless
show that the locomotory cycles can still be recovered using our methods.
    
To create a phase space from the raw data, we deployed techniques that
are commonly used to analyze time series from dynamical systems.  We took advantage of the fact that
the identities of the neurons in this experiment are individually
resolved and restricted to a few specific neurons of interest;
we focus on the command interneuron classes AVA and AVB whose
activities have been found to reflect various stages of the nematode
locomotory pattern.  These neurons are located downstream of most
sensory neurons in the olfactory and gustatory circuits and upstream
of motor neurons that control the muscles in the worm, and are known
to be hubs for sensorimotor integration.  First, we detrended the data
by removing mean and scale fluctuations against a moving average
background.  Second, we created a
delay embedding of the data, i.e., we mapped an observation function
$f\colon [0, T] \to \bR^\ambdim$ to its various lagged values: 
\[
S_{d, \tau} f\colon 
t \mapsto (f(t), f(t + \tau), \ldots, f(t + d\tau)).
\]  
The rationale for doing so comes from \emph{Takens' embedding theorem}
~\cite{Takens1981}, which roughly asserts that the topology of
an attractor of a dynamical system can be recovered from a suitable
delay embedding of an observation function.  For instance, if we take
$d = \tau = 1$, then $S_{1,1} f$ has the same span as the function
evaluated at the original times together with its first differences --
in other words, the phase space we are constructing is a
generalization of the phase space built using function values and
derivatives.  We took $d = 4$ and $\tau = 20$ frames (roughly 5
seconds), but our results are robust with respect to these choices.
Finally, as the Takens embedding is typically very high-dimensional,
we performed linear PCA to reduce the number of dimensions to 5.

We present our results in Figure \ref{fig:yemini_coords}, which
displays the phase spaces for six worms, colored by the uncorrected and corrected
coordinates.  The uncorrected coordinates are slightly noisier and
less even; see the quantitative discussion in the next section for
more measures of improvement.  We again recover
coordinates which correspond to observed worm behaviors.

\begin{figure}
    \centering
    \begin{subfigure}[b]{0.96\textwidth}
        \centering
        \includegraphics[width=\textwidth]{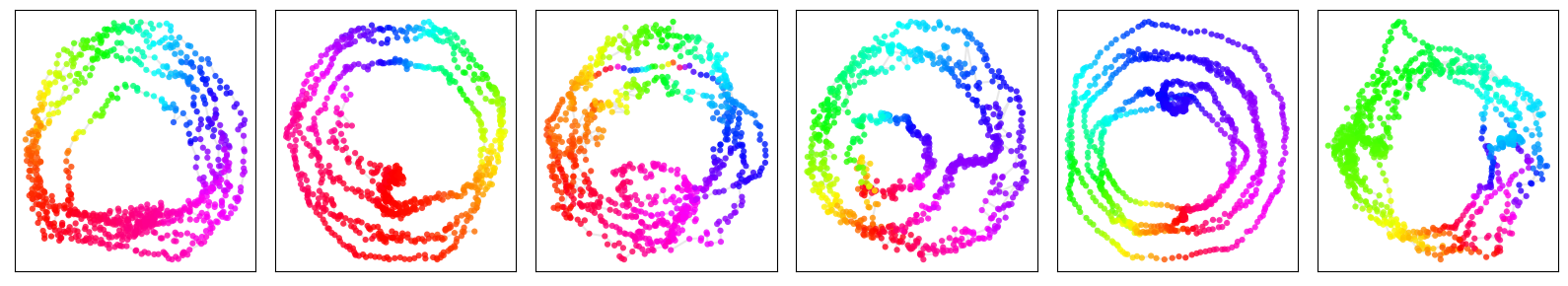}
        \caption{}
    \end{subfigure}
    \hspace{1cm}
    \begin{subfigure}[b]{0.96\textwidth}
        \centering
        \includegraphics[width=\textwidth]{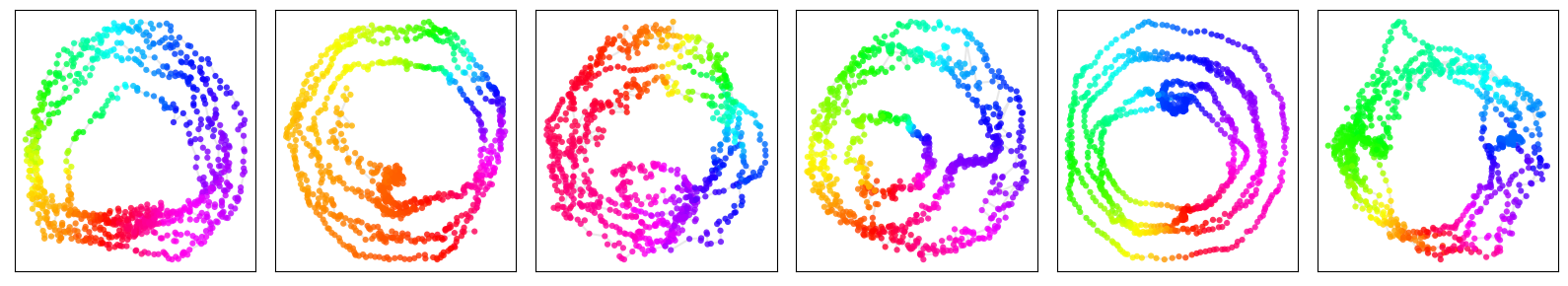}
        \caption{}
    \end{subfigure}
 
    \caption{\textit{C.\ elegans} neuronal manifolds constructed for six worms from \cite{Yemini2021}, colored by (a) the uncorrected coordinate and (b) the corrected coordinate.}
    \label{fig:yemini_coords}
\end{figure}

\subsection{Qualitative evaluation in terms of mutual information}
\label{ss:mi}

We propose an intrinsic criterion for evaluating the coordinates we
produce.  With real-world data sets, we do not typically have access
to ``ground truth''.  We could compare our results to expert-annotated
labels as we did for the Kato et al.\ dataset, but these are not always available, as in the case for the
Yemini et al.\ dataset.  Instead, we can quantify the association or correlation of a
coordinate with respect to the position in the full phase space using
\emph{mutual information}, which we briefly review.

Let $(X, Y)$ be a pair of continuous random variables with support
$\mathcal{X} \times \mathcal{Y}$, joint density $p_{(X,Y)}(x,y)$, and
marginal densities $p_X(x)$ and $p_Y(y)$.  Then the mutual information
between $X$ and $Y$ is defined as \[I(X, Y) = \int_{\mathcal{X} \times
  \mathcal{Y}} p_{(X,Y)}(x,y) \cdot \log \frac{p_{(X,Y)}(x,y)}{p_X(x)
  p_Y(y)} \, dx \, dy.\] Equivalently, mutual information can be
defined in terms of entropy: \[I(X, Y) = H(X) - H(X \mid Y) = H(Y) -
H(Y \mid X),\] i.e., $I(X,Y)$ is the reduction of uncertainty in $X$
once $Y$ is known, or vice versa.  If $X$ and $Y$ are independent, then $I(X,Y) = 0$.  Furthermore, if $(X,Y)$ is a bivariate Gaussian random variable, we have $I(X,Y) = -\frac{1}{2} \log(1 - \rho^2)$, where $\rho$ is the correlation between $X$ and $Y$.  This shows that mutual information is a form of nonlinear correlation between random variables.  
    
Kraskov, St\"ogbauer, and Grassberger~\cite{Kraskov2004} proposed a method to estimate $I(X, Y)$ from $N$ iid samples of $(X, Y)$ based on nearest neighbors.  Fix a positive integer $k$ (we set $k
= 3$ below), and consider the sup-metric on $\mathcal{X} \times
\mathcal{Y}$.  For each observation $(x,y)$, consider the smallest
rectangle centered at $(x,y)$ that contains the $k$ nearest neighbors
of $(x,y)$ according to the sup-metric; let the dimensions of this
rectangle be $\epsilon_x$ and $\epsilon_y$.  Let $n_x$ be the number
of other points $(x',y')$ with $d(x,x') \leq \frac{\epsilon_x}{2}$,
and let $n_y$ be the number of other points $(x',y')$ with $d(y,y')
\leq \frac{\epsilon_y}{2}$.  Then the \emph{KSG estimator} of the mutual
information is given by \[\hat{I}(X,Y) = \psi(k) - \frac{1}{k} - \frac{1}{N}
\sum_{(x,y)} (\psi(n_x) + \psi(n_y)) + \psi(N).\]  

While in principle the mutual information $I(X,Y)$ can be unbounded
above, the KSG estimator is bounded above: $$\hat{I}(X,Y) \leq
\hat{I}_{\text{max}} = \psi(N) - \psi(k) - \frac{1}{k}$$ since $n_x,
n_y \geq k$.  Equality is attained when $n_x = n_y = k$, i.e., the
$k$-nearest neighbors of a point are exactly the same whether they are
computed in $X$ or in $Y$.  This gives an additional reason why
$\hat{I}$ can be used to evaluate the concordance of two variables.
In order to standardize the values of the estimator $\hat{I}$ across
datasets, we normalize it by its maximum value and
calculate $$\hat{I}_{\text{norm}}(X,Y) =
\frac{\hat{I}(X,Y)}{\hat{I}_{\text{max}}}$$ instead.  Note that there
are several other definitions of normalized mutual information in the
literature, but $\hat{I}_{\text{norm}}$ is \emph{not} one of them: it
is simply a convenient normalization specific to the KSG estimator.
For more information on other normalized mutual information
estimators, see~\cite{Nagel2024}. 

Observe that the KSG estimator depends only on the metrics on $X$ and
$Y$.  Returning to our experiments, let $X$ be the dataset equipped with the Euclidean metric,
and let $Y$ be the same dataset, but with the metric induced by a
circular coordinate.  Different circular coordinates give rise to
different mutual information values.  The data processing inequality
implies that information is necessarily lost when we perform
dimensionality reduction, but a higher mutual information means that
more of the structure of the original dataset is retained by the
coordinate. 

To validate the use of KSG estimator to evaluate various circular
coordinates, we first applied it to the setup of the synthetic circle data
from above.  We generated 20 replicates of the unbalanced circle (with
the same parameters), extracted the uncorrected circular coordinate and
the corrected circular coordinate for each, and computed their mutual
information with the true coordinate.  The results are shown in
Figure~\ref{fig:mi}(a), where we see that the mutual information of
the corrected circular coordinate is significantly higher than the uncorrected
circular coordinate.  

Next, we turn to the worm neuronal activity data from Kato et
al.\ (Figure~\ref{fig:mi}(b)) and Yemini et al.\ (Figure~\ref{fig:mi}(c)).  While the mutual information estimates are now further from
their maximum possible value, we nonetheless see that the corrected coordinate from our approach
yields higher mutual information values than the uncorrected
coordinate in every case.  This demonstrates that our
subsample, align, and average method produces more informative coordinates and
captures more of the geometry of the original data than using
uncorrected coordinates in real-world data.

\begin{figure}
    \centering
    \begin{subfigure}[b]{0.32\textwidth}
        \centering
        \includegraphics[width=\textwidth]{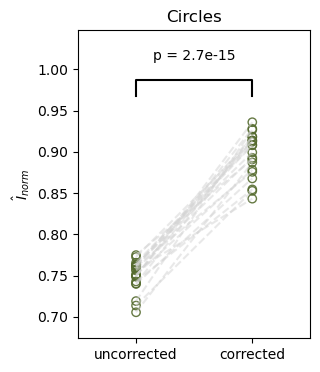}
        \caption{}
    \end{subfigure}
    \begin{subfigure}[b]{0.32\textwidth}
        \centering
        \includegraphics[width=\textwidth]{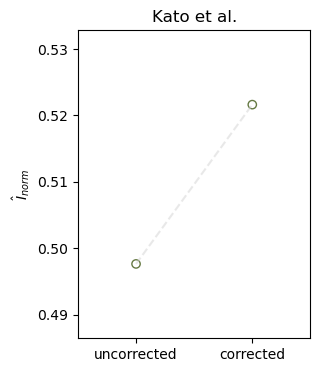}
        \caption{}
    \end{subfigure}
    \begin{subfigure}[b]{0.32\textwidth}
        \centering
        \includegraphics[width=\textwidth]{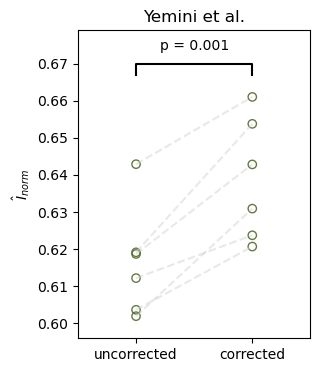}
        \caption{}
    \end{subfigure}
 
    \caption{Estimates of mutual information for (a) the unbalanced circle, (b) the worm from Kato et al., and (c) the six worms from Yemini et al.  Where displayed, the $p$-values are calculated from an one-sided paired $t$-test.}
    \label{fig:mi}
\end{figure}

\subsection{Runtime analysis}

Finally, we briefly demonstrate the time savings of our algorithm.  Basically, computing persistent cohomology and extracting
circular coordinates amounts to matrix manipulations which have
superlinear time complexity in the number of points (e.g.,
see~\cite{MMS11, MS24} which shows that persistent (co)homology and (co)cycles can be
computed in matrix multiplication time, asymptotically).  Therefore, by splitting the dataset into smaller pieces and then
computing and averaging persistent cohomology coordinates on the
subsamples, we may be able to obtain considerable decreases in
runtime.  See also the recent work~\cite{NM24} which explains how to
efficiently parallelize the distributed computation of persistent cohomology.

To demonstrate this in practice, we timed the execution of the
direct approach of computing the uncorrected and the corrected coordinate on two of the
datasets described above: the synthetic circle dataset (1000 points)
and the Kato et al. \textit{C. elegans} neuronal activity dataset
(3021 time points).  Each of these timing experiments were repeated 20 times,
and the results are displayed in Figure \ref{fig:timings}.   

\begin{figure}
    \centering
    \begin{subfigure}[b]{0.35\textwidth}
        \centering
        \includegraphics[height=5cm]{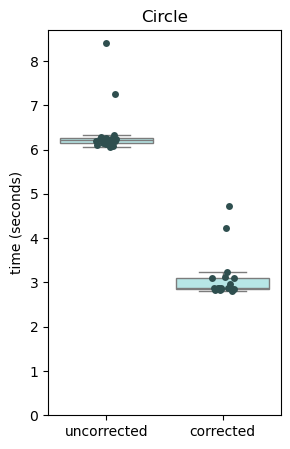}
        \caption{}
    \end{subfigure}
    \begin{subfigure}[b]{0.35\textwidth}
        \centering
        \includegraphics[height=5cm]{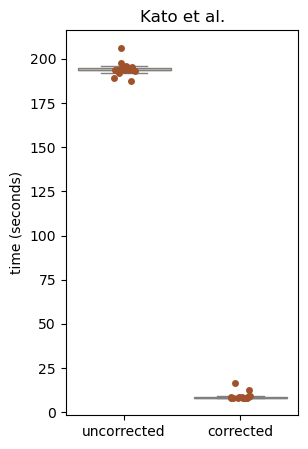}
        \caption{}
    \end{subfigure}
 
    \caption{Time taken to compute the uncorrected and corrected
      circular coordinates respectively, with 20 replicates in two
      datasets.}
    \label{fig:timings}
\end{figure}

For the circle dataset, the minimum runtime over 20 repeats for
computing the corrected circular coordinate is 2.80 seconds, compared
with 6.05 seconds for the uncorrected circular 
    coordinate, which is a $2.16\times$ runtime reduction.  For the
    worm dataset, the minimum runtime for the corrected circular
    coordinate is 8.17 seconds, compared with 188 seconds for the
    uncorrected circular coordinate, which is a $23.0\times$
    reduction.  This indicates that not only is the subsampling
    approach more time-efficient even with the extra alignment steps but also the time
    savings scales with the size of the datasets, which is important
    as the datasets that are generated nowadays are often very large.
    Our approach is well-equipped to handle these big datasets;
    indeed, by processing the data in smaller chunks we may be able to
    analyze datasets that could not be analyzed entirely within a
    single run at all.
    
\section{Discussion}

In this paper, we introduced a new algorithm for finding
robust circular coordinates on data that is expected to exhibit
recurrence.  Our algorithm has three main steps: first, it uses rejection 
sampling to correct for inhomogeneous sampling; second, it extracts the coordinates from
persistent cohomology; and finally, it applies
Procrustes matching to align and average the coordinates.  In experimental and synthetic data, this
subsampling and averaging approach produced coordinates which are
robust to noise and outliers and is significantly more efficient than
simply applying persistent cohomology to the entire data set.  Our
method is also stable by design, which is an essential desideratum for unsupervised data analysis.

There are a number of theoretical questions we intend to explore in
future work.  For one thing, the success of our algorithm raises
questions about geometries other than the circle for capturing
behavior of dynamical systems.  The circle is an obvious place to
start both because of its simplicity as well as its direct connection
to dimension 1 cohomology, but there are many other simple
model manifolds that could be worth exploring~\cite{Perea2018,Schonsheck2024}.  For another, there
remain many qualitative and quantitative questions about robustness
properties and the precise way our algorithm interacts with noise.
Finally, we do not understand the local convexity and global
properties of the optimization problem that arises when solving for the
Procrustes alignment.  Although in practice our algorithm converges
rapidly and is stable with respect to random jitter, it would be
desirable to have formal guarantees.

There are also many interesting scientific questions that arise from
the experimental studies.  The analysis of neuronal activity
recordings revealed a topological model of neuronal 
trajectories for \textit{C. elegans} that is
constructed from loops in which different regions of the brain state
space can be mapped to specific and interpretable macroscopic
behaviors in the worm.  Many questions arise about these trajectories
on the neuronal manifold:
    
\begin{itemize}

\item How universal or conserved are neuronal activity patterns -- in what ways does it vary between individual animals, or across various behavioral modes or environmental stimuli?

\item To what extent are future brain states predictable starting from an initial state?

\item What are the neuronal units and circuits that support the activity patterns we observe?

\end{itemize}

Using circular coordinates inferred from \textit{C.\ elegans} data,
our results confirm the locomotory cycle is highly conserved across
subjects and predictable to the extent that neuronal activity is
recurrent.  While this may be obvious in our simplified model system, for more complicated and higher-dimensional circuits it is
essential to have an unsupervised framework that can
operate in the high-dimensional ambient spaces of simultaneous
recordings of thousands of neurons or millions of voxels.

A next step would be to explore how
the presentation of different stimuli is picked up by the sensory
apparatus and integrated by decision-making circuits that output motor
commands, thereby changing the normal progression through the cycle.
Separately, given the prominence of the circular trajectories, it
would also be interesting to search for other phenomena and novel
patterns that are orthogonal to them in the neuronal activity of
\textit{C.\ elegans} \cite{Baas2017}.  In the data we analyzed this
neuronal circuit is more or less known since we are studying recurrent locomotory
behavior; in general one can search for neural subunits that
generate interesting coordinates.

Finally, all of these questions have analogues in many other neural
systems; for example, it would be interesting to understand whether
qualitatively similar conservation and decision-making proccesses
occur in animals such as flies and mice.

\FloatBarrier

%\bibliographystyle{plain}
%\bibliography{worm-references}
\printbibliography

\end{document}